\DeclareMathAlphabet{\mathcal}{OMS}{cmsy}{m}{n}
\newcommand{\myx}{\bm{x}} 
\newcommand{\Weight}{D} 
\newcommand{\weight}{\bm{d}} 
\newcommand{\weighti}{d} 
\newcommand{\tp}{\mathsf{T}}
\newcommand{\indi}{\mathds{1}} 
\newcommand{\iter}{t} 
\newcommand{\NumFea}{P} 
\newcommand{\subdim}{P} 
\newcommand{\ncluster}{K} 
\newcommand{\ridgepen}{\xi} 
\newcommand{\lab}{c} 
\def\R{\mathbb{R}}
\def\X{\mathcal{X}}
\def\S{\mathcal{S}}
\def\myb{\bm{\beta}}
\def\Xmi{\hat{X}_{\mathcal{I}}}
\def\tx{\hat{\bm{x}}}
\def\bx{\bar{\bm{x}}}
\def\I{\mathcal{I}}
\def\U{\mathcal{U}}
\def\L{\mathcal{L}}
\DeclareMathOperator*{\argmin}{arg\,min}
\DeclareMathOperator*{\argmax}{arg\,max}
\newtheorem{lemma}{Lemma}
\newtheorem{proposition}{Proposition}
\title{Weighted Sparse Subspace Representation\\
\large A Unified Framework for Subspace Clustering, Constrained Clustering, and Active Learning\\
}
\author{
  Hankui Peng \\
  Department of Applied Mathematics and Theoretical Physics \\
  University of Cambridge \\
  Cambridge, UK\\
  \texttt{hp467@cam.ac.uk} \\
   \And
  Nicos G. Pavlidis \\
  Department of Management Science \\
  Lancaster University \\
  Lancaster, UK\\
  \texttt{n.pavlidis@lancaster.ac.uk} \\
}
\begin{document}
\maketitle

\begin{abstract}
Spectral-based subspace clustering methods have proved successful in many
challenging applications such as gene sequencing, image recognition,
and motion segmentation. In this work, we first propose a novel
spectral-based subspace clustering algorithm that seeks to represent
each point as a sparse convex combination of a few nearby points. We
then extend the algorithm to constrained clustering and active learning
settings. Our motivation for developing such a framework stems from the
fact that typically either a small amount of labelled data is available
in advance; or it is possible to label some points at a cost. The
latter scenario is typically encountered in the process of validating a cluster assignment. 
%
%
Extensive
experiments on simulated and real data sets show that the proposed
approach is effective and competitive with state-of-the-art methods.
\end{abstract}

\keywords{Subspace clustering \and Constrained clustering \and Active learning}

\section{Introduction}\label{wssr_intro}

In many challenging real-world applications involving the grouping of
high-dimensional data, different clusters can be well approximated as lower
dimensional subspaces.  This is the case for example in gene
sequencing~\citep{mcwilliams2014subspace}, face
clustering~\citep{elhamifar2013sparse}, motion
segmentation~\citep{rao2009motion}, and text mining~\citep{peng2018subspace}. The problem of simultaneously estimating
the subspace corresponding to each cluster and partitioning a group of points
into clusters according to these subspaces is called \emph{subspace
clustering}~\citep{vidal2011subspace}.


%
Spectral methods for subspace clustering have demonstrated excellent
performance in numerous real-world applications \citep{liu2012robust,
lu2012robust, elhamifar2013sparse, li2015structured, huang2015new}. These
methods construct an affinity matrix for spectral clustering by solving an
optimisation problem that aims to approximate each point through a linear
combination of other points from the same subspace.
%
%
%
In this paper we first propose a method called \emph{Weighted Sparse Simplex
Representation}~(WSSR). Our method is based on the Sparse Simplex
Representation (SSR) of \cite{huang2013new},
in which each point is approximated through a convex combination of other points. This method was not proposed as a subspace clustering method, but rather for modelling the brain anatomical and genetic networks. 
We modify SSR to ensure
that each point is approximated through a sparse convex combination of nearby
neighbours, and thus obtain an algorithm that is effective for the subspace clustering problem.

Due to the complete lack of labelled data clustering methods rarely achieve
perfect performance. In practice, it is often the case that a small amount of
data is either available in advance, or can be obtained at a certain cost.  The
latter scenario is very common during the process of validating a clustering
result. If labelled data are available the clustering algorithm should be able
to accommodate this external information (constrained clustering). Even more
interesting is the active learning setting in which the choice of points to be
labelled is a part of the learning problem. In this case the algorithm should
choose to query the labels of points so as to maximise the quality of the
overall model.
%
%
%
In this work, we propose an iterative active learning and constrained
clustering framework for WSSR. We draw upon the work of \cite{peng2019subspace}
to select informative points to query for subspace clustering.  The cluster
assignment is updated after obtaining the labels of these points. Our proposed
constrained clustering approach is guaranteed to produce an assignment that is
consistent with all the available labels.


%
%


The rest of this paper is organised as follows. In Section \ref{wssr_rw}, we
discuss some relevant existing literature in the areas of subspace clustering,
constrained clustering, and active learning. In Section \ref{wssr_wssr}, we
propose the problem formulation of the Weighted Sparse Simplex Representation
(WSSR), discuss its properties, and present an approach for solving the
problem. In Section~\ref{sec_wssr_si}, we propose an integrated active learning
and constrained clustering framework. We demonstrate the effectiveness of our
proposed methodology on synthetic and real data in Section~\ref{sec_wssr_syn}
and \ref{sec_wssr_real}, and provide concluding remarks in
Section~\ref{wssr_conclusions}.

\section{Related Work}\label{wssr_rw}

%
The linear subspace clustering problem can be defined as follows. A collection of $N$
data points $\X = \left\{\myx_{i} \right\}_{i=1}^{N} \subset \R^\NumFea$ is
drawn from a union of $\ncluster$ linear subspaces $\left\{\mathcal{S}_{k}
\right\}_{k=1}^{\ncluster}$ with added noise. Each subspace can be defined as,
%
%
\begin{equation}\label{eq_subspace}
%
%
\S_{k}=\left\{\myx \in \R^{\NumFea} \,|\, \myx= V_{k}\bm{y} \right\},
\;\; \text{for} \;\; k=1, \ldots, \ncluster,
\end{equation}
where $V_{k} \in \R^{\NumFea \times \subdim_{k}}$, with $1 \leqslant
\subdim_{k}<\NumFea$, is a matrix whose columns constitute a basis for
$\S_{k}$, and $\bm{y}\in\R^{\subdim_{k}}$ is the representation of
$\bm{x}$ in terms of the columns of $V_{k}$. 
The goal of subspace clustering is to find the number of subspaces $\ncluster$;
the subspace dimensions $\left\{\subdim_{k} \right\}_{k=1}^{\ncluster}$; a
basis $\left\{V_{k} \right\}_{k=1}^{\ncluster}$ for each subspace; and finally
the assignments of the points in $\X$ to clusters. 
%
%
%
A natural formulation of this problem is
%
%
as the minimisation of the {\em reconstruction error}\/,
%
%
\begin{equation}\label{eq:KSCobjective}
%
%
\sum_{i=1}^{N}\min_{V_1, \ldots, V_\ncluster} \left\{    \min_{k=1,\ldots,K} \|\myx_i - V_{k} V_{k}^\tp \myx_i\|_2^2  \right\}.
\end{equation}
$K$-subspace clustering (KSC)~\citep{bradley2000k} 
is an iterative algorithm 
to solve the problem in~\eqref{eq:KSCobjective}.
Like the classical $K$-means clustering, KSC
alternates between estimating the subspace bases (for
a fixed cluster assignment),
and assigning points to clusters (for a fixed
set of bases). 
However, iterative algorithms
%
are very sensitive to initialisation 
and most often converge to poor local minima~\citep{lipor2017leveraging}. 


Currently the most effective approach to subspace clustering is through
spectral-based methods~\citep{lu2012robust,
elhamifar2013sparse, hu2014smooth, you2016scalable}.
Spectral-based methods consist of two steps: first an affinity matrix is estimated, and
then normalised spectral clustering~\citep{ng2002spectral} is applied to this affinity matrix.
The affinity matrix is constructed by exploiting the {\em self-expressive}
property: any $\myx_{i} \in \S_k$ 
can be expressed as a linear combination of $\subdim_{k}$ other points
from~$\S_k$.
Thus, for each $\myx_i \in \X$ they first solve a convex optimisation problem 
of the form,
\begin{equation}\label{eq_selfexpress}
\bm{\beta}_{i}^\star = \min_{\bm{\beta}_i \in \R^{N-1}}  \left\| \myx_i - X_{-i} \bm{\beta}_i \right\|_{p} + \rho \left\|\bm{\beta}_i \right\|_{q}, 
\end{equation}
where $X_{-i} = \left[\myx_{1},\ldots, \myx_{i-1},\myx_{i+1}\,\ldots,\myx_{N} \right] \in \R^{P\times (N-1)}$ is a
matrix whose columns correspond to the points in $\X \backslash \{\myx_i\}$; and $\rho>0$ is a penalty parameter.
The first term in the
objective function quantifies the error of approximating 
$\myx_i$ through $X_{-i}  \myb_{i}$.
%
%
The penalty (regularisation) term is included to promote solutions
in which $\beta_{ij}^\star$ is small (and ideally zero) if
$\myx_{j}$ belongs to a different subspace than $\myx_{i}$.
%
%
%
%
%
%
%
%
After solving the problem in~\eqref{eq_selfexpress} for each $\myx_i \in \X$,
%
%
the affinity matrix is typically defined as $A = \left(|B|+|B|^\tp \right)/2$, where
$B = [\bm{\beta}_1^\star, \ldots, \bm{\beta}_N^\star]$.

%
%
%
Least Squares Regression (LSR)~\citep{lu2012robust} uses the $L_2$-norm for both the approximation error, and the regularisation term
($p=q=2$). Smooth Representation Clustering (SMR)~\citep{hu2014smooth} 
also uses the $L_2$-norm on the approximation
error, while the penalty term is given by $\|L^{1/2}\bm{\beta}_i\|_2^2$ in which
$L$ a positive definite Laplacian matrix constructed from pairwise similarities.
The main advantage of using the $L_2$-norm is that the optimisation problem has a
closed-form solution. However the resulting coefficient vectors are dense and hence
the affinity matrix contains connections between points from different subspaces.
The most prominent spectral-based subspace clustering algorithm is Sparse
Subspace Clustering (SSC)~\citep{elhamifar2013sparse}. 
In its most general formulation, SSC accommodates the possibility that
points from each subspace are contaminated by both noise and sparse outlying entries.
In SSC, $\bm{\beta}_i^\star$ is the solution to the following problem,
\begin{align}\label{eq:SSC}
\min_{\bm{\beta}_i \in \R^{N-1}} & \; \|\bm{\beta}_i\|_1 + \rho_\eta \|\bm{\eta}_i\|_1 + \frac{\rho_z}{2} \| \bm{z}_i\|_2^2,\\
\text{s.t.} & \; \myx_i = X_{-i} \bm{\beta}_i + \bm{\eta}_i + \bm{z}_i. \nonumber  
\end{align}
SSC therefore decomposes the approximation error
into two components ($\bm{\eta}_i$ and $\bm{z}_i$), which are measured with different norms.
%
Following the success of SSC, several variants have been proposed, including
SSC with Orthogonal Matching Pursuit (SSC-OMP)~\citep{you2016scalable}, 
Structured Sparse Subspace Clustering (S3C)~\citep{li2017structured}, and
Affine Sparse Subspace Clustering (ASSC)~\citep{li2018geometric}.

The method most closely connected to our approach is
the Sparse Simplex Representation (SSR) algorithm, proposed by
\cite{huang2013new} for the modelling of brain networks.
SSR solves the problem in~\eqref{eq_selfexpress} using $p=2$ and $q=1$ with the additional constraint
that the coefficient vector has to lie in the $(N-1)$-dimensional unit simplex 
$\bm{\beta}_i \in \Delta^{N-1} = \{\myb \in \R^{N-1} \,|\, \myb \geqslant 0, \myb^\tp \bm{1}=1\}$. 
Since SSR approximates $\myx_i$ through a convex combination of other points,
the coefficients have a probabilistic interpretation. 
However, SSR induces no regularisation since $\|\myb_i\|_1=1$ for all
$\bm{\beta}_i \in \Delta^{N-1}$, hence coefficient vectors are dense.

We next provide a short overview of clustering with external information,
called {\em constrained clustering}~\citep{basu2008constrained}, and active
learning.
Due to space limitations, we only mention the work that is most closely related
to our problem.
In constrained clustering, the external information can be either in the form
of class labels or as pairwise ``must-link'' and ``cannot-link'' constraints.
Spectral methods for constrained clustering incorporate this information by
modifying the affinity matrix.  Constrained Spectral Partitioning (CSP)
\citep{wang2010flexible} introduces a pairwise constraint matrix and solves a
modified normalised cut spectral clustering problem. Partition Level
Constrained Clustering (PLCC)~\citep{liu2018partition} forms a pairwise
constraint matrix through a side information matrix which is included as a
penalty term into the normalised cut objective.
Constrained Structured Sparse Subspace Clustering
(CS3C)~\citep{li2017structured} 
%
%
is specifically designed for subspace clustering. CS3C incorporates a side
information matrix that encodes the pairwise constraints into the formulation
of S3C. The algorithm alternates between solving for the coefficient matrix and
solving for the cluster labels.
Constrained clustering algorithms that rely exclusively on modifying the
affinity matrix cannot guarantee that all the constraints will be satisfied.
CS3C+~\citep{li2018constrained} is an extension of CS3C that applies 
constrained $K$-means algorithm~\citep{wagstaff2001constrained} within the
spectral clustering stage, to ensure constraints are satisfied.

In active learning the algorithm controls the choice of points for which
external information is obtained.
The majority of active learning techniques are designed for supervised methods,
and little research has considered the problem of active learning for subspace
clustering~\citep{lipor2015margin,lipor2017leveraging,peng2019subspace}.
\cite{lipor2015margin} propose two active strategies.  The first queries the
point(s) with the largest reconstruction error to its allocated subspace. The
second queries the point(s) that is maximally equidistant to its two closest
subspaces. 
\cite{lipor2017leveraging} extend the second strategy for spectral clustering
by setting the affinity of ``must-link'' and ``cannot-link'' pairs of points to
one and zero, respectively.
Both strategies by~\cite{lipor2015margin} are effective in identifying
mislabelled points. However, correctly assigning these points is not guaranteed
to maximally improve the accuracy of the estimated subspaces, hence the overall
quality of the clustering.
\cite{peng2019subspace} propose an active learning strategy for sequentially
querying point(s) to maximise the decrease of the overall reconstruction error
in~\eqref{eq:KSCobjective}.

\section{Weighted Sparse Simplex Representation} \label{wssr_wssr}

In this section, we describe the proposed spectral-based subspace clustering
method, called Weighted Sparse Simplex Representation (WSSR). Here we describe the
WSSR algorithm under the assumption that no labelled data is available. 
The constrained clustering version described in the next section 
accommodates the case of having a subset of labelled observations 
at the start of the learning process.

Let $\weighti_{ij} \geqslant 0$ denote a measure of dissimilarity between
$\myx_i, \myx_j \in \X$, and $\I$ the set of indices of all points
in $\X\backslash \{i\}$ with finite dissimilarity to $\myx_i$, that is
$\I = \{1\leqslant j \leqslant N \,|\, \weighti_{ij} < \infty, \; j \neq i \}$.
%
In WSSR, the coefficient vector for
each~$\myx_{i}\in\mathcal{X}$ is the solution to the following convex
optimisation problem, 
\begin{align}\label{eq_wssr_en}
\bm{\beta}_i^\star = & \argmin_{\bm{\beta}_{i}}  \frac{1}{2} \left\| \myx_{i} - \Xmi \bm{\beta}_{i} \right\|_{2}^{2} +\rho\left\|\Weight_\I \bm{\beta}_{i} \right\|_{1} +\frac{\ridgepen}{2} \|\Weight_\I \bm{\beta}_{i} \|_{2}^{2} \\ 
& \text{s.t.} \quad \bm{\beta}_{i}^{\tp}\bm{1} = 1, \quad\bm{\beta}_{i} \geqslant \bm{0}, \nonumber
\end{align}
where $\rho, \ridgepen>0$ are penalty parameters, 
$\Xmi \in \R^{P \times |I|}$ is a matrix whose columns are the scaled versions of the points in $\X_\I$, and
$\Weight_\I = \text{diag}(\bm{\weight}_{\I})$ is a diagonal 
matrix of finite pairwise dissimilarities between $\myx_i$ and the points in $\X_\I$. 
We first outline our motivation for the choice of penalty function, and then
discuss the definition of $\Xmi$ and
the choice of $\weighti_{ij}$ in the next paragraph.
%
%
%
The use of both an $L_{1}$ and an $L_{2}$-norm in~\eqref{eq_wssr_en} is motivated by the elastic net
formulation~\citep{zou2005regularization}. 
%
%
%
%
The $L_{1}$-norm penalty promotes solutions in which coefficients of dissimilar points 
are zero.
%
%
The $L_{2}$-norm penalty encourages what is known as the {\em grouping effect}\/: 
%
%
if a group of points induces a similar approximation error, then either all 
points in the group are represented in $\myb_i^\star$, or none is.
This is desirable for subspace clustering, because
the points in such a group should belong
to the same subspace. In this case, if this subspace is different from the one 
$\myx_i$ belongs to, then all points should be assigned a coefficient of zero.
%
%
If instead the group of points are from the same subspace as $\myx_i$, then 
it is beneficial to connect $\myx_i$ to all of them, because this increases
the probability that points from each subspace will belong
to a single connected component of the graph defined by the affinity matrix~$A$. 
%
%

%
Spectral subspace clustering algorithms commonly normalise the points in $\X$
to have unit $L_2$-norm prior to estimating the coefficient
vectors~\citep{elhamifar2013sparse,you2016scalable}.
The simple example in Figure~\ref{fig_3d_example} illustrates that this
normalisation tends to increase cluster separability. In the following, we denote $\bx_{i} = \myx_i / \|\myx_i\|_2$.
%
%
\begin{figure}[ht!] \centering \begin{subfigure}{.49\textwidth}
\includegraphics[width=\textwidth, trim={3cm, 8cm, 3cm, 9cm}]{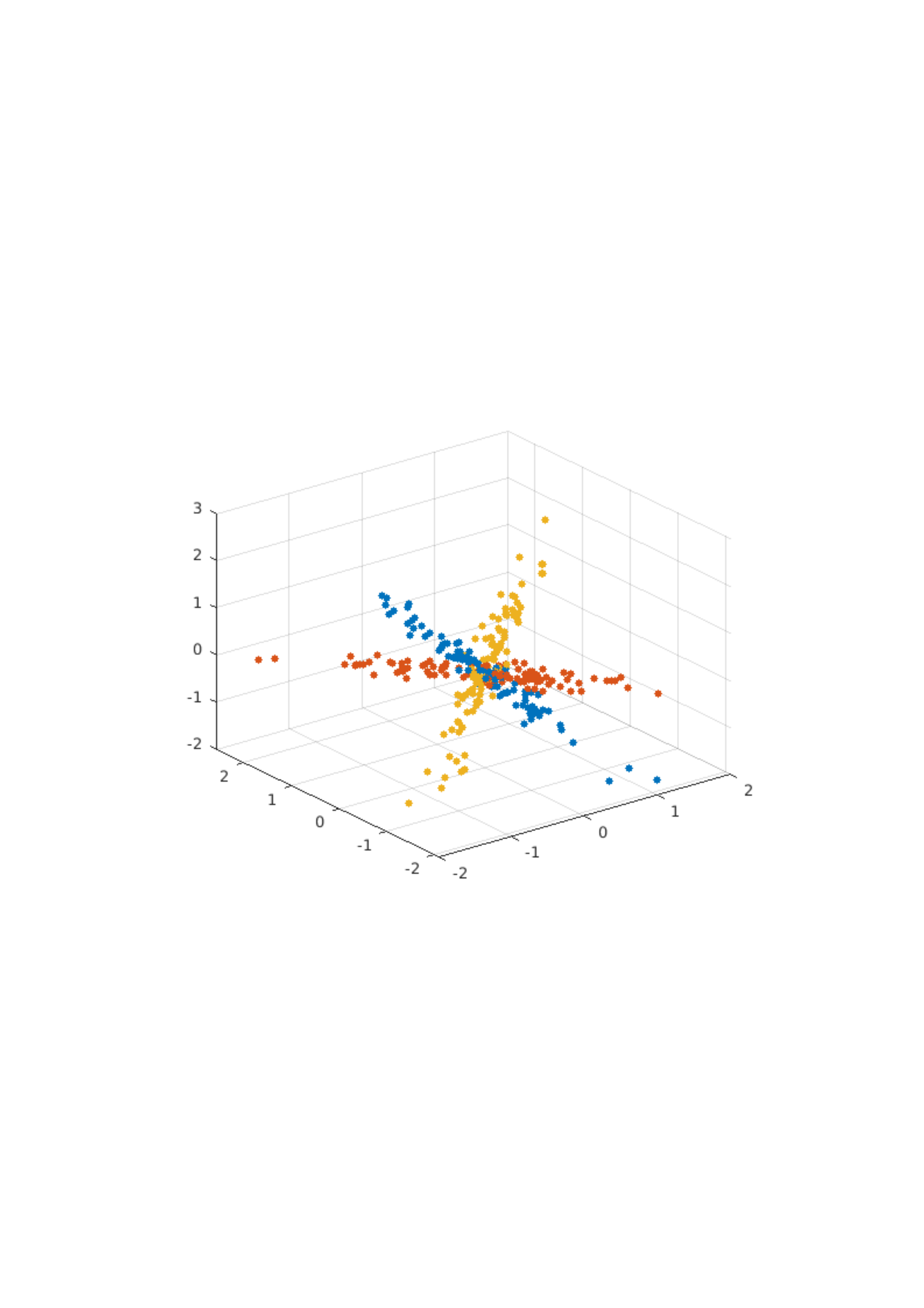}
\end{subfigure} \begin{subfigure}{.49\textwidth}
\includegraphics[width=1.07\textwidth, trim={3cm, 9.8cm, 3cm,
9.8cm}]{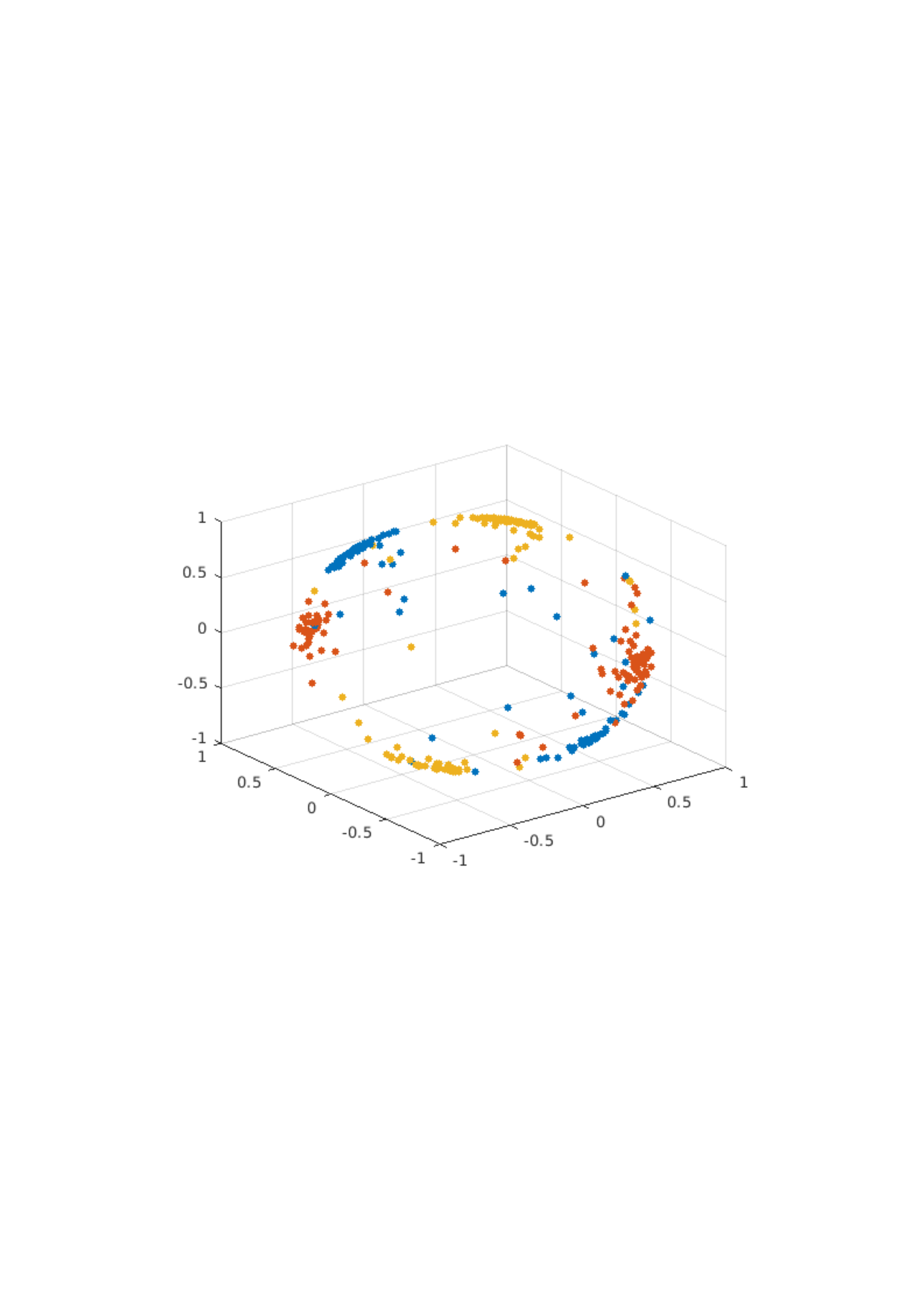}
\end{subfigure} \caption{An illustration of the data normalisation step, and the rationale for using the inverse cosine similarity as the dissimilarity measure.
\textbf{Left:} The original data points. \textbf{Right:} The data points that
have been normalised to lie on the unit sphere.}\label{fig_3d_example}
\end{figure}
However, projecting the data onto the unit sphere has important implications
for the WSSR problem. 
In~\eqref{eq_wssr_en} we want the two conflicting objectives of minimising the
approximation error and selecting a few nearby points to be separate.
Fig.~\ref{fig_StretchingPoints} contains an example that shows that this is not
true after projecting onto the unit sphere.
%
%
%
%
%
%
%
\begin{figure}[htbp!]
\centering
\includegraphics[height=.48\textwidth, trim={3cm 0 0 0},
clip]{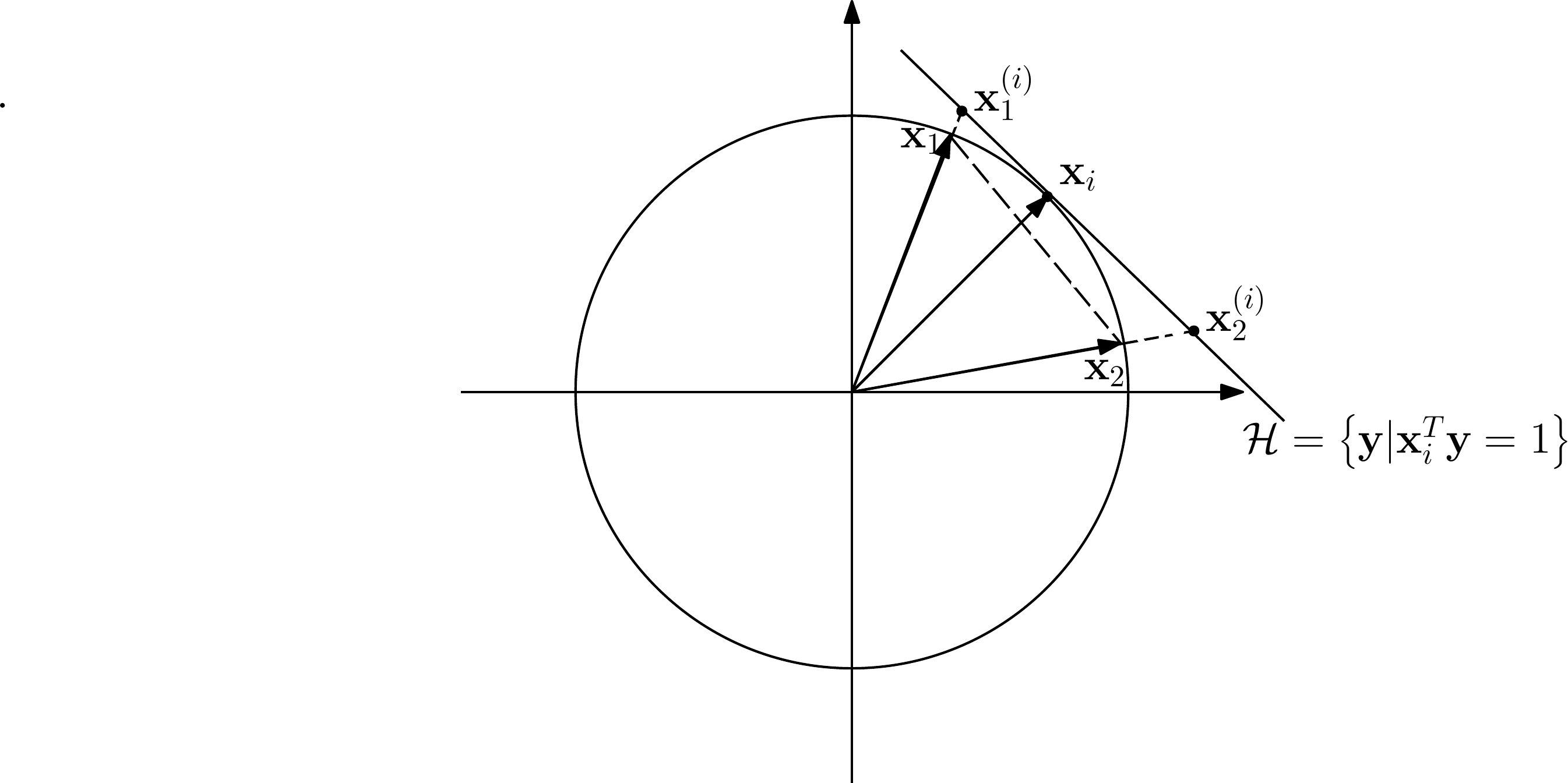}
\caption{A geometric illustration of the necessity for stretching points in
$X$.}\label{fig_StretchingPoints}
\end{figure}
%
%
In Fig.~\ref{fig_StretchingPoints}, the point closest to $\bx_i$ on the unit sphere is $\bx_{1}$.
The direction of $\bx_{i}$ can be perfectly approximated by a convex
combination of $\bx_{1}$ and $\bx_{2}$, but all convex combinations $\alpha
\bx_{1} + (1-\alpha) \bx_{2}$ with $\alpha \in (0,1)$ have an $L_2$-norm less
than one.
%
%
Since the cardinality of $\myb_i$ affects the length of the approximation, it affects both the penalty term and the approximation error. 
%
%
%
%
%
A simple solution to address this problem is 
%
%
%
to scale every point $\myx_j$ with $j \in \I$ such that $\tx_j^{i} = t^{i}_j
\myx_j$ lies on the hyperplane perpendicular to the unit sphere at~$\bx_i$,
$\tx_j^{i} \in  \{\hat{\myx} \in \R^\NumFea \;|\; \hat{\myx}^\tp \bx_i=1\}$.
%
%
Note that this implies that if $\myx_j^\tp \myx_i<0$, then $t^{i}_j$ is
negative.
An inspection of Figure~\ref{fig_3d_example} suggests that this is sensible.
An appropriate measure of pairwise dissimilarity given the aforementioned
preprocessing steps is the inverse cosine similarity,
\begin{equation}\label{eq_dij}
\weighti_{ij}= \|\myx_{i}\|_2 \|\myx_{j}\|_2 /(\myx_i^{\tp} \myx_{j}) =
|\bx_{i}^{\tp}\bx_{j}|^{-1}.
\end{equation}
%
%
%
%
%
Since $\weighti_{ij}$ is infinite when $\myx_j^\tp \myx_i = 0$,
such points could never be assigned a non-zero coefficient, therefore they 
%
%
are excluded from $\Xmi$.  
%

We now return to the optimisation problem in~\eqref{eq_wssr_en}. Due to the constraint $\myb_i \in \Delta^{|\I|}$ and the fact that $\weight_{\I} >0$,
$\|D_\mathcal{I} \myb_i\|_1 = \weight_\I^\tp \myb_i$.
This implies that the objective function is a quadratic,
and the minimisation problem in~\eqref{eq_wssr_en} is equivalent
to the one below,
%
%
\begin{align}\label{eq:nqp}
%
\min_{\myb_{i}} & \frac{1}{2} \myb_i^\tp ( \Xmi^\tp \Xmi + 
\ridgepen D_\mathcal{I}^2) \myb_{i} + (\rho \weight_{\mathcal{I}}- \Xmi^\tp \tx_i)^\tp \myb_i.
\end{align}
Moreover, $f(\myb_i)$ is guaranteed to be strictly convex whenever $\ridgepen>0$.
%
%
%
Therefore WSSR corresponds to the following quadratic programme (QP),
\begin{align}\label{eq_wssr_qp}
\bm{\beta}_i^\star = & \min_{\myb_i} f(\myb_i), \;\; \text{s.t.} \;\; \myb_i \geqslant 0, \; \; \myb_i^\tp \bm{1} = 1.
%
%
\end{align}
%
%

The choice of $\rho$ in~\eqref{eq_wssr_qp} is critical to obtain an affinity matrix that
accurately captures the cluster structure.
%
The ridge penalty parameter, $\ridgepen$, is typically assigned with
a small value, e.g. $10^{-4}$~\citep{gaines2018algorithms}.
%
%
For ``large'' $\rho$, the optimal solution
assigns a coefficient of one to the nearest neighbour of $\bx_i$ 
and all other coefficients are zero. This is clearly
undesirable.
%
%
Setting this parameter is complicated by the fact that
an appropriate choice of $\rho$ differs for each~$\bx_i$.
Lemma~\ref{proof_wssr_nec} is a result which can be used
to obtain a lower bound on 
$\rho$ such that the solution of~\eqref{eq_wssr_qp} 
is the ``nearest neighbour'' approximation.
%
%
%
%
The proof of this lemma, as well as a second lemma which
establishes its geometric interpretation
can be found in Appendix~\ref{appd:nec_suf}.

\begin{lemma}\label{proof_wssr_nec}
Let $(j)$ denote the index of the $j$-th nearest neighbour of $\bx_i$, and $\tx_{i}^{(j)}$ denote the $j$-th nearest neighbour of $\bx_{i}$. 
Assume that $\tx^{(1)}_{i}$ is unique and that $\tx^{(1)}_{i} \neq \bx_i$. We also
assume that the pairwise dissimilarities satisfy:
\begin{align*}
\| \bx_i - \tx^{(j)}_i \|_2  > \| \bx_i - \tx^{(k)}_i \|_2  & \Rightarrow \weighti_{ij} > \weighti_{ik},\\
\| \bx_i - \tx^{(j)}_i \|_2  = \| \bx_i - \tx^{(k)}_i \|_2  &\Rightarrow \weighti_{ij} = \weighti_{ik}.
\end{align*}
If
\[
\bm{e}_{1}=[1,0,\ldots,0 ]^{\tp} = \argmin_{\myb_i \in  \Delta^{|\mathcal{I}|}} \; \frac{1}{2} \myb_i^\tp ( \Xmi^\tp \Xmi + \ridgepen D_\mathcal{I}^{\tp}D_\mathcal{I}) \myb_{i} + (\rho - \Xmi^\tp \bx_i)^\tp \myb_i,
\]
%
then 
\begin{align}\label{eq:rho}
\rho > \max\left\{0, \max_{j \in\left\{2,\ldots, |\mathcal{I}| \right\} } \frac{(\tx_{i}^{(1)} - \tx_{i}^{(j)})^\tp (\tx_{i}^{(1)} -\bx_i) +
\ridgepen (\weighti_{i}^{(1)})^{2}}{\weighti_{i}^{(j)} - \weighti_{i}^{(1)}} \right\}.
\end{align}
\end{lemma}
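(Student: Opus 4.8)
The plan is to characterise $\bm{e}_{1}$ as the minimiser of a strictly convex quadratic programme over the simplex through first-order optimality, and then to convert that optimality condition into a lower bound on $\rho$. Since $\ridgepen>0$, the matrix $\Xmi^\tp \Xmi + \ridgepen D_\I^\tp D_\I$ is positive definite, so $f$ is strictly convex and the objective in~\eqref{eq_wssr_qp} is a genuine smooth quadratic on all of $\Delta^{|\I|}$: recall that $\|D_\I \myb_i\|_1 = \weight_\I^\tp \myb_i$ holds throughout the simplex, so no subgradient technicalities arise at the vertex $\bm{e}_1$. For a convex $f$ on a convex set, $\bm{e}_1$ is the unique minimiser exactly when the variational inequality $\nabla f(\bm{e}_1)^\tp (\myb_i - \bm{e}_1) \geqslant 0$ holds for every $\myb_i \in \Delta^{|\I|}$. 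Because this quantity is affine in $\myb_i$ and the simplex is the convex hull of its vertices, it suffices to test it at the vertices $\bm{e}_j$, collapsing the whole condition to the finitely many scalar inequalities $[\nabla f(\bm{e}_1)]_j \geqslant [\nabla f(\bm{e}_1)]_1$ for $j=2,\ldots,|\I|$.

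The next step is to compute these gradient components. Writing $\nabla f(\bm{e}_1) = (\Xmi^\tp \Xmi + \ridgepen D_\I^\tp D_\I)\bm{e}_1 + (\rho \weight_\I - \Xmi^\tp \bx_i)$ and ordering the columns of $\Xmi$ so that the first is the nearest neighbour $\tx_i^{(1)}$, I would read off each entry. The diagonal ridge term contributes only to the first coordinate, giving $\ridgepen(\weighti_i^{(1)})^2$ there and nothing off-diagonal. The crucial simplification is the geometric normalisation: every stretched point lies on the hyperplane $\{\hat{\myx} \in \R^\NumFea \;|\; \hat{\myx}^\tp \bx_i = 1\}$, so $\Xmi^\tp \bx_i = \bm{1}$ and the $-\Xmi^\tp \bx_i$ contribution is the constant $-1$ in every coordinate, which cancels upon subtraction. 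After this cancellation the inequality $[\nabla f(\bm{e}_1)]_j \geqslant [\nabla f(\bm{e}_1)]_1$ reduces to $\rho(\weighti_i^{(j)} - \weighti_i^{(1)}) \geqslant (\tx_i^{(1)} - \tx_i^{(j)})^\tp \tx_i^{(1)} + \ridgepen(\weighti_i^{(1)})^2$, and I would then use $(\tx_i^{(1)} - \tx_i^{(j)})^\tp \bx_i = 1-1 = 0$ to replace $\tx_i^{(1)}$ by $\tx_i^{(1)} - \bx_i$ in the cross term, producing exactly the numerator in~\eqref{eq:rho}.

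It remains to isolate $\rho$. Here the hypotheses enter decisively: uniqueness of the nearest neighbour together with the monotone dissimilarity assumption guarantees $\weighti_i^{(j)} > \weighti_i^{(1)}$ for every $j \geqslant 2$, so dividing by the strictly positive $\weighti_i^{(j)} - \weighti_i^{(1)}$ preserves the inequality and yields the per-$j$ bound; taking the maximum over $j$ and recalling $\rho>0$ gives~\eqref{eq:rho}. The main obstacle I anticipate is not any single calculation but extracting full value from the hyperplane identity $\hat{\myx}^\tp \bx_i = 1$: it is what collapses the linear term to a constant and what turns the bare cross term $(\tx_i^{(1)} - \tx_i^{(j)})^\tp \tx_i^{(1)}$ into the geometrically meaningful $(\tx_i^{(1)} - \tx_i^{(j)})^\tp(\tx_i^{(1)} - \bx_i)$, without which the bound would not take the stated form. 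A secondary point to handle carefully is the strict-versus-nonstrict inequality: strict convexity keeps $\bm{e}_1$ the unique minimiser even when a vertex inequality holds with equality, since the positive curvature along that edge precludes a competing optimum, so the strict $>$ in~\eqref{eq:rho} is best read as the safe threshold guaranteeing the nearest-neighbour solution.
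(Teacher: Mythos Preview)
Your proposal is correct and follows essentially the same route as the paper's proof: both establish the result by computing the gradient of the quadratic objective at $\bm{e}_1$ and requiring the directional derivative along each edge $\bm{e}_j-\bm{e}_1$ of the simplex to be nonnegative, then isolating $\rho$. Your write-up is in fact more careful than the paper's in two places: you explicitly invoke the monotone dissimilarity hypothesis to justify that $\weighti_i^{(j)}-\weighti_i^{(1)}>0$ before dividing, and you articulate why checking vertices suffices via convexity and affinity, whereas the paper simply asserts these. Your two-step use of the hyperplane identity $\tx^{(j)\tp}\bx_i=1$ is a slightly different algebraic path to the same numerator---the paper instead lets the $-\Xmi^\tp\bx_i$ terms cancel directly against each other in the subtraction $[\nabla f(\bm{e}_1)]_j-[\nabla f(\bm{e}_1)]_1$ without ever invoking that identity---but both computations are valid and land on the same inequality.
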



\noindent
Note that our definition of pairwise dissimilarities satisfies the requirements of the
lemma. The proof uses directional derivatives and the convexity of the objective function. 
In effect, Lemma~\ref{proof_wssr_nec} states that there are cases in which
the nearest-neighbour approximation is optimal for all $\rho > 0$. This occurs
when it is possible to define a hyperplane that contains $\tx_{i}^{(1)}$, the nearest neighbour of $\bx_{i}$, and separates the column vectors in $\Xmi$ from $\bx_{i}$.
In all other cases, there exists a positive value of $\rho$ such that $\myb_i^\star$
has cardinality greater than one.

We close this section by outlining
a simple proximal gradient descent algorithm~\citep{parikh2014proximal}
that is faster for large instances of the problem than standard QP solvers.
To this end, we first we express~\eqref{eq_wssr_qp} as an unconstrained
minimisation problem through the use of an indicator function,
\begin{equation*}\label{eq_unconstrained} 
\min_{\myb_i} f(\myb_i)+ \indi_{\Delta^{|\mathcal{I}|}} (\myb_i), 
\end{equation*}
%
where $\indi_{\Delta^{|\mathcal{I}|}}(\myb_i)$ is zero for $\myb_i \in
\Delta^{|\mathcal{I}|}$ and infinity otherwise.
At each iteration, proximal gradient descent updates $\myb_i^t$ by projecting
onto the unit simplex a step of the standard gradient descent,
\begin{equation*}\label{eq:ProxGrad1}
\myb_i^{\iter+1} = \argmin_{\myb \in \Delta^{|\mathcal{I}|}} \frac{1}{2}\| \myb
- \myb_i^{\iter} + \eta^{\iter} \nabla f(\myb_i^{\iter}) \|_2^2,
\end{equation*}
where $\eta^{\iter}$ is the step size at iteration $\iter$. 
Projecting onto $\Delta^{|\mathcal{I}|}$ can be achieved via a simple algorithm
with complexity $\mathcal{O}\left(|\mathcal{I}|
\log\left(|\mathcal{I}|\right)\right)$~\citep{wang2013projection}.

\section{Active learning and constrained clustering}\label{sec_wssr_si}

In this section, we describe the process of identifying informative points to
query (active learning), and then updating the clustering model to accommodate
the most recent labels (constrained clustering). The constrained clustering
algorithm described in this section would also be used if a subset of labelled
observations was available at the start of the learning process.
%
%
%




We adopt the active learning strategy of~\cite{peng2019subspace},
that queries the points whose label information is expected to induce the largest reduction
in the reconstruction error function (defined in~\eqref{eq:KSCobjective}).
Let $\U \subset \{1,\ldots,N\}$ denote the set of indices of the unlabelled points,
and $\L \subset \{1,\ldots,N\}$ denote the set of indices of labelled points.
Furthermore, let $\{\lab_{i}\}_{i =1}^N$ denote 
the cluster assignment of each point, and $\{l_i\}_{i \in \L}$ the class labels for the labelled points.
%
%
%
%
%
%
%
%
To quantify the expected reduction in reconstruction error after obtaining the label of $\myx_i$,
with $i \in \U$, we estimate two quantities.
The first is the decrease in reconstruction error that will result if $\myx_i$ is
removed from cluster $\lab_{i}$. This is measured by the function
$U_{1}(\myx_{i},V_{\lab_{i}})$, where $V_{\lab_i}$ is a matrix containing
a basis for cluster $\lab_i$. The second
is the increase in reconstruction
error due to the addition of $\myx_i$ to a different cluster $\lab_{i}^{\prime}$.
This is measured by the function $U_{2}(\myx_i,V_{\lab_i'})$. To estimate $U_2$
we assume that $\lab_i'$ is the cluster that is the second
nearest to $\myx_i$. This 
assumption is not guaranteed to hold but it is valid in the vast majority of cases.
%
%
According to~\cite{peng2019subspace} the most informative point to query is,
\begin{equation}\label{eq_al_util}
\myx_i^{\star}= \argmax_{i \in \U}
\left\{U_{1}(\myx_{i},V_{\lab_i})-U_2(\myx_{i},V_{\lab_{i}'}) \right\},
\end{equation}
%
%
%

The difficulty in calculating $U_{1}(\myx_{i},V_{\lab_i})$ and $U_2(\myx_{i},V_{\lab_{i}'})$
is that one needs to account for the fact that a change in the cluster assignment
of $\myx_i$ affects
%
%
$V_{\lab_i}$ and $V_{\lab_i'}$.
Recall that (irrespective of the choice of the clustering algorithm), the basis $V_k$ for each cluster (subspace) $k$  is computed by
performing Principal Component Analysis (PCA) on the set of points assigned to this
cluster. 
The advantage of the approach by~\cite{peng2019subspace} is that this
is recognised, and a computationally efficient method to approximate $U_1$ and $U_2$ is proposed.
%
%
%
%
%
%
%
In particular, using perturbation results for PCA~\citep{critchley1985influence}, a first-order approximation of the change in $V_{\lab_i}$ and $V_{\lab_i'}$ is
computed at a cost of $\mathcal{O}(\NumFea)$ (compared to
the cost of PCA which is $\mathcal{O}(\min\{N_k P^2, N_k^2 P\})$,
where $N_k$ is the number of points in cluster~$k$).
%

%
Once the labels of the queried points are obtained we proceed to
the constrained clustering stage in which
%
%
we update the cluster assignment to accommodate the new information.
%
%
The first step in our approach modifies pairwise
dissimilarities between points in a manner similar to the work of \cite{li2017structured}.
%
%
%
Specifically, for each $\myx_i \in \X$ we update all pairwise dissimilarities 
$\weighti_{ij} \in \weight_\I$ according to,
\begin{equation}\label{eq_newD}
\weighti_{ij} = \left\{ \begin{array}{ll}
\frac{\|\myx_{i}\|_2 \|\myx_{j}\|_2}{\myx_i^{\tp} \myx_{j}} e^{1 - 2 \cdot \indi(l_i=l_j)}
+ \alpha \indi(l_i \neq l_j),  & \;\; \text{if} \;\; i,j \in \L, \\
%
%
\frac{\|\myx_{i}\|_2 \|\myx_{j}\|_2}{\myx_i^{\tp} \myx_{j}} + \alpha \indi(c_i \neq c_j), &
\;\; \text{otherwise}.
\end{array} \right.
\end{equation}
The first fraction is the dissimilarity measure in the absence of any label information
as defined in~\eqref{eq_dij}.
%
%
If the labels of both $\myx_i$ and $\myx_j$ are known and they are different then the dissimilarity
is first scaled by $e$ and a constant $\alpha \in [0,1]$ is added.
If $l_i = l_j$, then the original dissimilarity is scaled by $e^{-1}$.
%
%
%
%
If the label of either $\myx_i$ or $\myx_j$ is unknown then no scaling is applied, but
if in the previous step the two points were assigned to different clusters then
their dissimilarity is increased by $\alpha$.
The term $\alpha$ quantifies the confidence 
of the algorithm in the previous cluster assignment. 
A simple and effective heuristic is to assign $\alpha$ equal to the proportion
of labelled data.

After updating pairwise dissimilarities through~\eqref{eq_newD}
we update the coefficient vectors for each point
by solving the problem in~\eqref{eq_wssr_qp}.
The resulting affinity matrix is the input to 
the normalised spectral clustering of~\cite{ng2002spectral}.
%
%
This cluster assignment is not guaranteed to satisfy all the constraints.
%
%
%
%
%
To ensure constraint satisfaction we use this clustering as the initialisation point for
the $K$-Subspace Clustering with Constraints (KSCC)
algorithm \citep{peng2019subspace}.
KSCC is an iterative algorithm to optimise the following
objective function,
%
%
%
\begin{equation}\label{eq_kscc}
\min_{V_1,\ldots, V_K} \frac{1}{2} \left\{ \sum_{i \in \U}  \min_{k=1,\ldots,K} \|\myx_i - V_k V_k^\tp \myx_i\|^2_2
+ \min_{ \substack{ P \in \mathcal{P}(K)\\ n \in 1,\ldots,K!}} \; \sum_{k=1}^K \; \sum_{ \substack{j \in \L : \\l_j = k}}
\|\myx_j  - V_{P_{k}} V_{P_{k}}^\tp \myx_j\|_2^2,
\right\},
\end{equation}
where $P$ denotes a permutation of the indices $\{1,\ldots,K\}$,
and $\mathcal{P(K)}$ is the set of all such permutations.
The first term is the reconstruction error for the unlabelled points. The second
term quantifies the reconstruction error for the labelled points. To this end we have
to identify the appropriate mapping between class labels and cluster labels. Hence
we consider all possible mappings (permutations of labels one to $K$) and select
the one producing the lowest reconstruction error. The inner sum in the second term ensures that 
all points of the same class are assigned to a unique cluster.  Thus
all the constraints are satisfied at each iteration.
%
%
KSCC monotonically reduces the value of the objective function.
Therefore it converges to the local minimum of \eqref{eq_kscc}
whose region of attraction contains the WSSR cluster assignment.
%
%
The computational complexity of KSCC is the same as KSC, which is $\mathcal{O}(\min\{N_k P^2, N_k^2 P\})$.
%
%
We summarise the whole active learning and constrained clustering framework in procedural form in Algorithm~\ref{algo_all}. We refer to this constrained version of WSSR as WSSR+. 
\begin{algorithm}[htbp!]
	\caption{Active Learning and Constrained Clustering with WSSR}
	\DontPrintSemicolon
	\SetAlgoLined
	\SetKwInOut{Input}{Input}
	\SetKwInOut{Output}{Output}
	\Input{WSSR-related parameters; 
		Sets of `must-link' and `cannot-link' constraints: $\mathcal{S}_{M}, \mathcal{S}_{C}$; 
		Penalty parameter: $\alpha$;
		Number of points to query in each iteration: $b$
	}
	\% \emph{Active learning}\\ 
	- Query the $b$ most informative point according to~\eqref{eq_al_util}\\
	\% \emph{Constraint incorporation}\\
	\textbf{For} $\myx\in\mathcal{X}$:\\
	1. Compute the updated weight vector $\weight^{\star}$ according to~\eqref{eq_newD}\;
	2. Normalise and stretch each column vector in $X$ \;
	3. Solve the WSSR problem in~\eqref{eq_wssr_en} 
	to obtain the coefficient vector $\bm{\beta}$\;
	\textbf{End}
	- Combine all $\bm{\beta}$s to obtain the coefficient matrix $B\in\mathbb{R}^{N\times N}$\;
	- Apply normalised cut spectral clustering \citep{ng2002spectral} to the data affinity matrix $A=\frac{1}{2}\left(|B|+|B|^{\tp} \right)$\\
	\% \emph{Constraint satisfaction}\\
	- Enforce the constraint information using KSCC~\citep{peng2019subspace} and obtain the updated cluster labels 
	\label{algo_all}
\end{algorithm}

\section{Experiments on Synthetic Data}\label{sec_wssr_syn}
In this section, we conduct experiments on synthetic data to evaluate the performance of WSSR under various subspace settings. 
We compare to the following state-of-the-art spectral-based subspace clustering methods: SSC \citep{elhamifar2013sparse}, S3C \citep{li2015structured}, ASSC \citep{li2018geometric}, SSC-OMP \citep{you2016scalable}, LSR \citep{lu2012robust}, and SMR \citep{hu2014smooth}. Performance results of various methods are compared in terms of varying angles between subspaces, varying noise levels, and varying subspace dimensions.~\footnote{The code of our proposed method is available at:~\url{https://github.com/hankuipeng/WSSR}}

\subsection{Varying Angles between Subspaces} 
In this set of experiments, we generate data from two one-dimensional subspaces embedded in a three-dimensional space. Each cluster contains 200 data points drawn from one of the subspaces. In addition, additive Gaussian noise with standard deviation $\sigma=0.01$ is added to the data uniformly. We vary the angles between the two subspaces $\theta$ to be between 10 and 60 degrees, and evaluate the performance of various algorithms under each setting. 
The default settings are adopted for all subspace clustering algorithms that we provide comparisons to. Let $k$ denote the maximum number of points being considered in the sparse representation. Both SSC-OMP and SMR have $k=10$ in their default parameter settings. We adopt the same setting and set $\rho=0.01$.
Performance results as evaluated by clustering accuracy are reported in Table~\ref{tab_va_res}. 

It can be seen that WSSR achieves the best performance across all settings, and its performance steadily improves with the increase of the angles between subspaces. Similar performance improvement with the increase of $\theta$ can also be observed in SSR, as well as other algorithms. However, the performance of SSR is significantly worse than that of WSSR when the angles are small. 
SSC and S3C achieve strong performance across all scenarios as well. It is worth noting that the performance of ASSC is noticeably worse than other algorithms, which could be explained by the fact that all clusters come from linear subspaces. 
The performance of SMR is a close second to WSSR in five out of six scenarios. This could be attributed to its affinity to WSSR, as SMR applies the Frobenius norm on the error matrix and it also makes use of $k$ nearest neighbours.
\begin{table}[h!]
	\begin{center}
		\begin{tabular}{ c c c c c c c }
			\hline
			&$\theta=$ 10& $\theta=$ 20&$\theta=$ 30&$\theta=$ 40 &$\theta=$ 50&$\theta=$ 60\\
			\hline
			WSSR &\textbf{0.978}&\textbf{0.973}&\textbf{0.993}&\textbf{0.993}&\textbf{0.990}&\textbf{0.993}\\
			\hline 
			SSR &0.568&0.865&0.528&0.905&$\underline{0.985}$&\textbf{0.993}\\
			\hline
			SSC&0.943&0.815&\underline{0.990}&0.950&$\underline{0.985}$&\textbf{0.993}\\
			\hline
			S3C&0.963&\textbf{0.973}&\underline{0.990}&0.970&0.983&\textbf{0.993}\\
			\hline
			ASSC&0.520&0.570&0.570&0.568&0.510&0.555\\
			\hline
			SSC-OMP&0.863&0.893&0.848&0.823&0.528&0.813\\
			\hline
			LSR&0.898&0.878&0.900&0.873&0.940&0.930\\ 
			\hline
			SMR&\underline{0.968}&\underline{0.960}&\underline{0.990}&\underline{0.975}&0.978&\underline{0.988}\\
			\hline
		\end{tabular}
	\end{center}
	\caption{Accuracy of various subspace clustering algorithms on synthetic data with varying angles between subspaces.}
	\label{tab_va_res}
\end{table}

\subsection{Varying Noise Levels}
Next, we explore the effect of various noise levels on cluster performance. Again we generate data from two subspaces, each containing 200 data points. The angle between two subspaces is set to be 60 degrees, so that the angle between subspaces does not play a big role in determining the cluster performance. One of the subspaces is one-dimensional, and the other is two-dimensional. This difference to the previous set of experiments is to increase the intersection between the two subspaces as the noise level increases. Additive Gaussian noise with zero mean and standard deviation $\sigma$ is added to the data uniformly, in which $\sigma$ ranges from 0.0 to 0.5. The parameter settings for all algorithms remain the same as before, and the performance results are reported in Table \ref{tab_vn_res}. 


Firstly, the clustering accuracy of almost all algorithms decreases with the increase of noise levels. Secondly, most algorithms have close to or exactly perfect clustering accuracy in the noise-free scenario. However the performance of most algorithms degrades rapidly with the increase of noise level, whereas the performance of WSSR stays competitive.
Very poor performance from ASSC can be observed across all noise levels, for reasons explained in the previous set of experiments. 
It is worth noting that all SSC-based methods (SSC, S3C, ASSC, SSC-OMP) yield poor performance in the presence of varying levels of noise. 
In comparison, the methods (WSSR, SSR, LSR, SMR) that use the Frobenius norm on the error matrix or the $L_2$-norm on the reconstruction error have favourable performance. However, none of them (apart from WSSR) have a sparsity inducing term in the objective function. 
\begin{table}[h!]
	\begin{center}
		\begin{tabular}{ c c c c c c c }
			\hline
			&$\sigma=$ 0.0& $\sigma=$ 0.1&$\sigma=$ 0.2&$\sigma=$ 0.3&$\sigma=$ 0.4&$\sigma=$ 0.5\\
			\hline
			WSSR&\textbf{1.000}&\textbf{0.970}&\textbf{0.945}&\textbf{0.883}&\textbf{0.815}&\textbf{0.745}\\
			\hline
			SSR&\textbf{1.000}&0.940&0.848&0.805&0.780&0.725\\
			\hline
			SSC&\textbf{1.000}&0.633&0.503&0.513&0.508&0.555\\
			\hline
			S3C&\underline{0.980}&0.685&0.575&0.608&0.523&0.593\\
			\hline
			ASSC&0.605&0.530&0.553&0.558&0.503&0.543\\
			\hline
			SSC-OMP&\textbf{1.000}&0.730&0.575&0.510&0.518&0.530\\
			\hline
			LSR&\textbf{1.000}&\underline{0.943}&\underline{0.900}&\underline{0.838}&\underline{0.788}&\underline{0.735}\\ 
			\hline
			SMR&\underline{0.980}&0.935&0.868&0.810&0.775&0.705\\
			\hline
		\end{tabular}
	\end{center}
	\caption{Accuracy of various subspace clustering algorithms on synthetic data with varying noise levels.}
	\label{tab_vn_res}
\end{table}

Next, we further investigate the reason behind the slightly less than perfect performance of S3C and SMR, and the poor performance of ASSC in the noise-free scenario. Shown in Figure \ref{fig_vn_w} are the affinity matrices for WSSR, S3C, SMR, and ASSC. 
It is clear to see that the entries in the affinity matrix of WSSR are sparse yet the block-diagonal structure is very clear. 
The affinity matrix of S3C also exhibits a block-diagonal structure and is very sparse. However, the majority of the non-zero entries are concentrated on a few key data points. Such an unbalanced affinity matrix can lead to undesirable spectral clustering performance.  
The affinity matrix of both SMR and ASSC are very dense. However the block diagonal structure can still be easily detected in the affinity matrix of SMR, whereas the entries in ASSC are more evenly spread out.
Therefore, it is not surprising that the performance of ASSC is much worse than that of the other methods. 
\begin{figure}[h!]
	\centering
	\begin{subfigure}[b]{0.24\textwidth}
		\centering
		\includegraphics[width=\textwidth, trim={1cm, 0.5cm, 1cm, 0.5cm}]{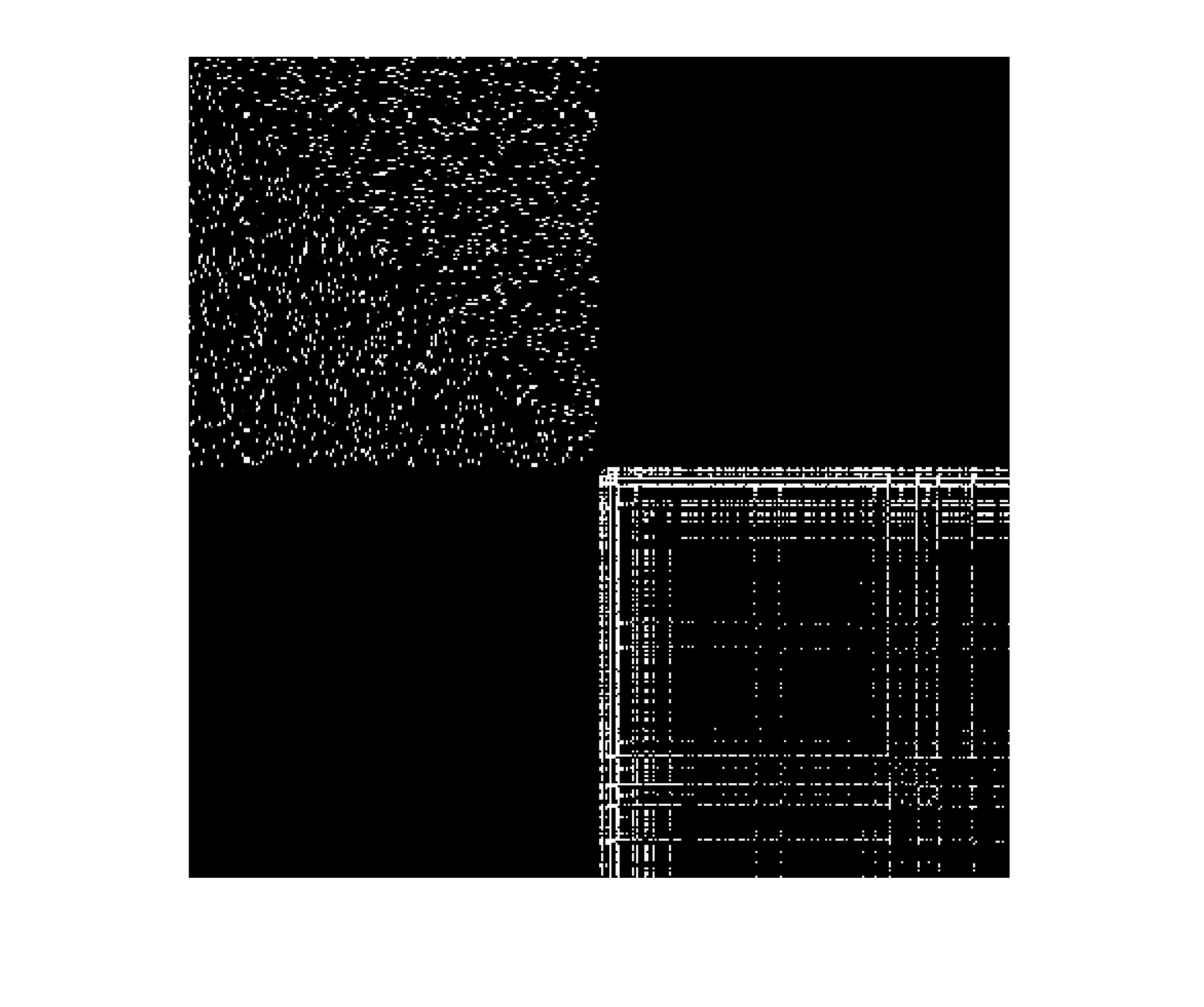}
		\caption{WSSR.}
	\end{subfigure}
	\begin{subfigure}[b]{0.24\textwidth}
		\centering
		\includegraphics[width=\textwidth, trim={1cm, 0.5cm, 1cm, 0.5cm}]{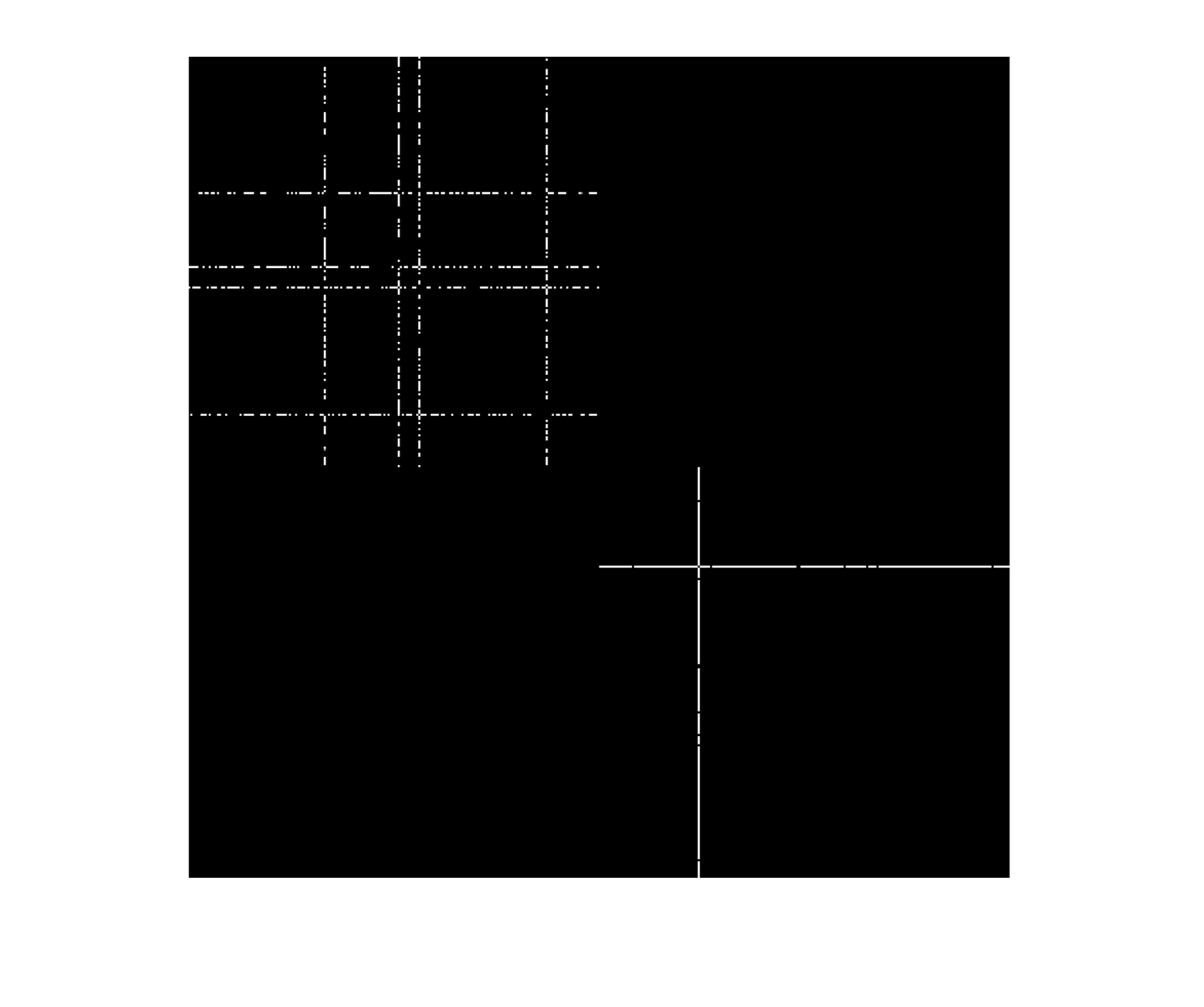}
		\caption{S3C.}
	\end{subfigure}
	\begin{subfigure}[b]{0.24\textwidth}
		\centering
		\includegraphics[width=\textwidth, trim={1cm, 0.5cm, 1cm, 0.5cm}]{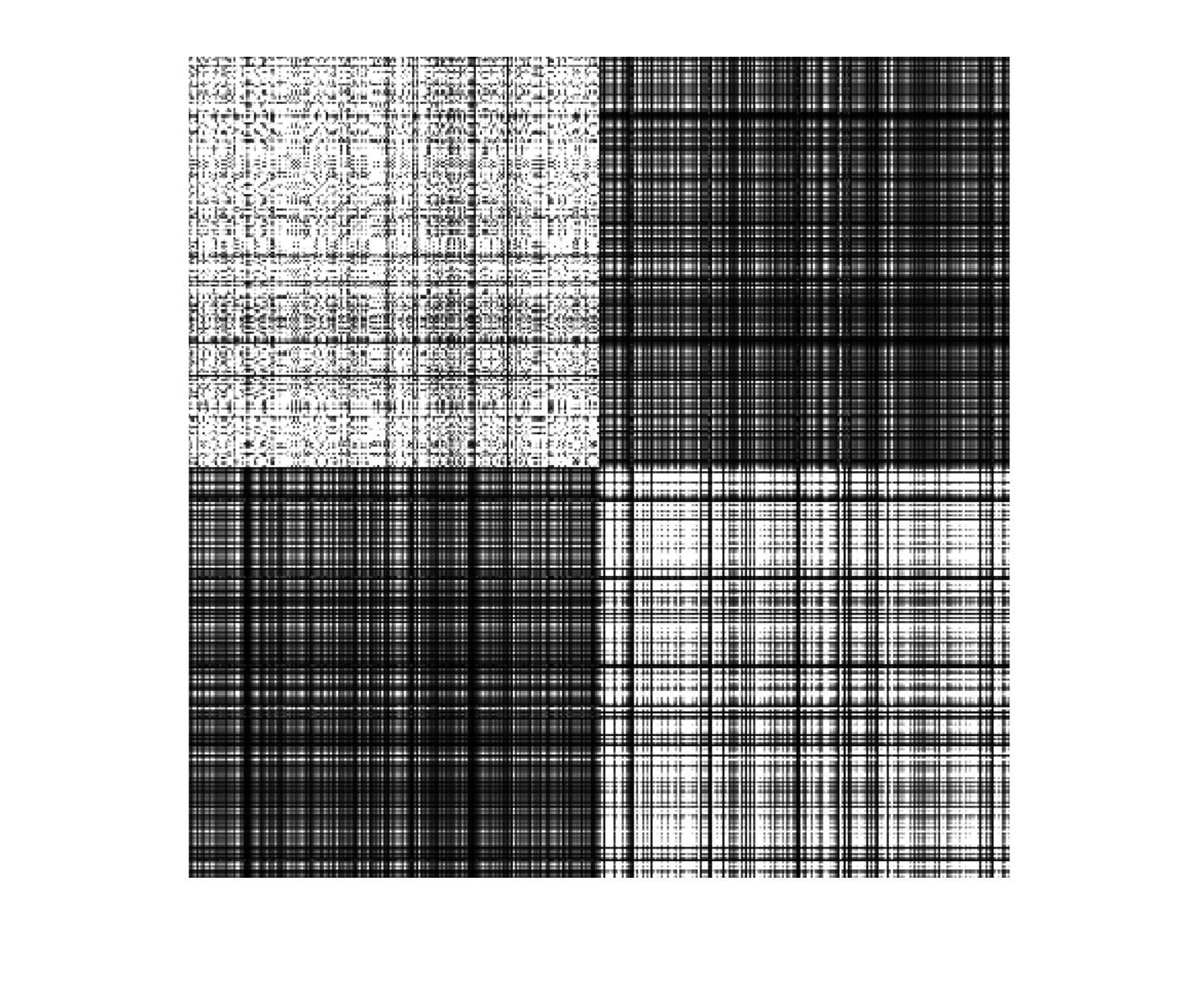}
		\caption{SMR.}
	\end{subfigure}
	\begin{subfigure}[b]{0.24\textwidth}
		\centering
		\includegraphics[width=\textwidth, trim={1cm, 0.5cm, 1cm, 0.5cm}]{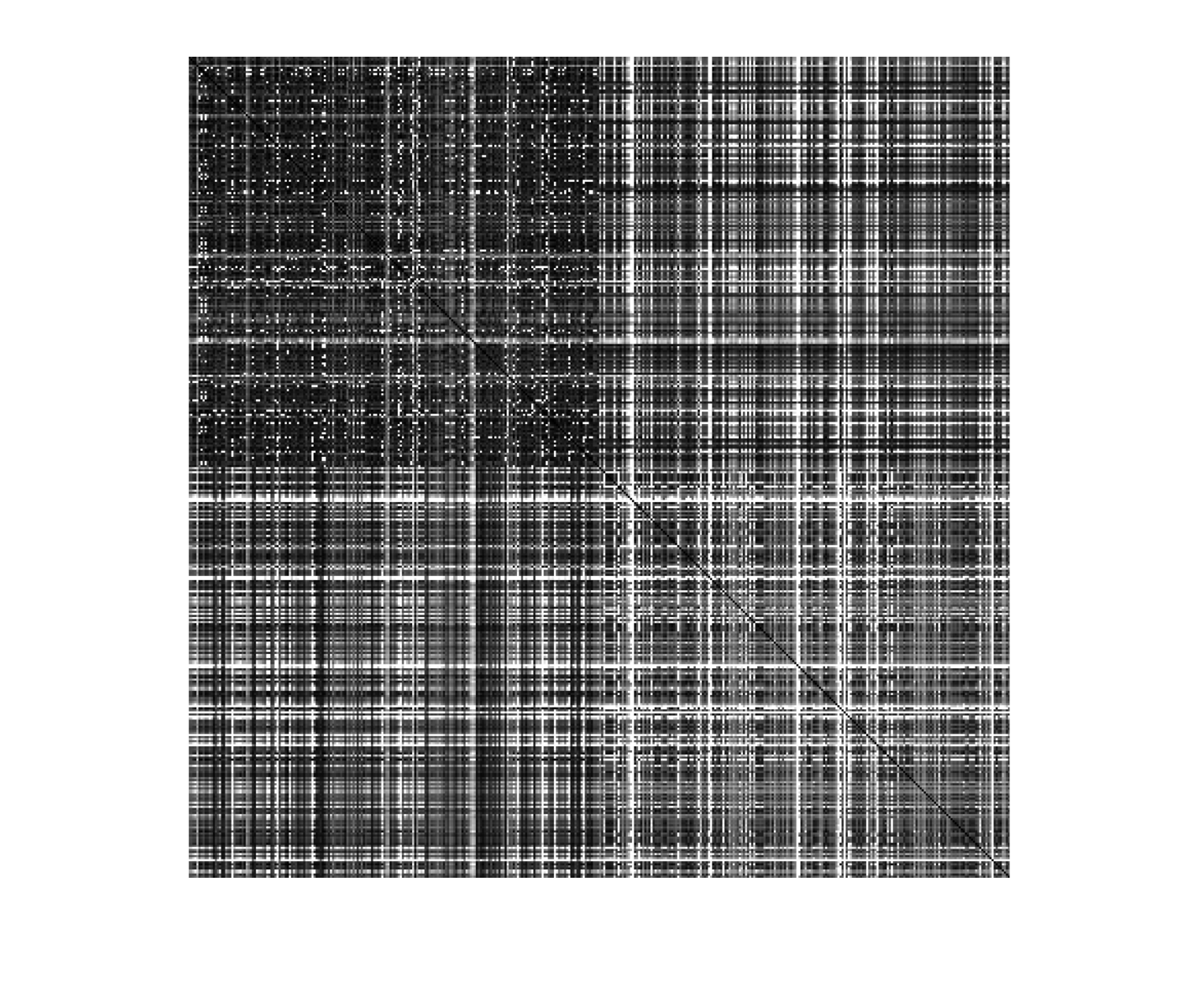}
		\caption{ASSC.}
	\end{subfigure}
	\caption{Visualisation of the affinity matrix in the noise-free scenario. 
	Zeroes in the affinity matrix are shown in black colour.}
	\label{fig_vn_w}
\end{figure}

\subsection{Varying Subspace Dimensions}\label{sec_synexp_vq}
Another aspect we would like to investigate is how clustering performance changes with the increase of subspace dimensions $\subdim_{k}$ under a fixed ambient space dimension $P$. Intuitively, one would expect the performance to get worse as $\subdim_{k}$ increases. This is because the intersection among clusters is likely to increase as the subspace dimension grows. In this set of experiments, we fix the ambient space dimension to be 20 and allow the subspace dimension to vary between 2 and 16. The data are generated from 4 subspaces, and all subspaces have equal subspace dimension. Additive noise with $\sigma=0.01$ is added to the data uniformly. 
We set the neighbourhood size $k$ in WSSR to be 50, and keep $\rho=0.01$. The default parameter settings are adopted for all other methods. The performance results of various methods are reported in Table~\ref{tab_vq_res}.
\begin{table}[ht!]
	\begin{center}
		\begin{tabular}{ c c c c c c c c c }
			\hline
			&$\subdim_{k}=$ 2& $\subdim_{k}=$ 4&$\subdim_{k}=$ 6&$\subdim_{k}=$ 8&$\subdim_{k}=$ 10&$\subdim_{k}=$ 12&$\subdim_{k}=$ 14&$\subdim_{k}=$ 16\\
			\hline
			WSSR&\textbf{1.000}&\textbf{1.000}&\textbf{1.000}&\textbf{1.000}&\textbf{1.000}&\textbf{1.000}&\textbf{0.991}&\textbf{0.874}\\
			\hline
			SSR&$\underline{0.999}$&\textbf{1.000}&\textbf{1.000}&$\underline{0.998}$&$\underline{0.999}$&0.951&0.383&0.328\\
			\hline
			SSC&\textbf{1.000}&\textbf{1.000}&\textbf{1.000}&\textbf{1.000}&\textbf{1.000}&\textbf{1.000}&\underline{0.964}&0.728\\
			\hline
			S3C &\underline{0.999}&\textbf{1.000}&\textbf{1.000}&\textbf{1.000}&\textbf{1.000}&\textbf{1.000}&\textbf{0.991}&\underline{0.809}\\
			\hline
			ASSC&0.845&0.998&\textbf{1.000}&\textbf{1.000}&\textbf{1.000}&\underline{0.994}&0.954&0.599\\
			\hline
			SSC-OMP&0.414&\textbf{1.000}&\underline{0.999}&0.993&0.966&0.913&0.475&0.321\\
			\hline
			LSR&0.861&\underline{0.999}&\textbf{1.000}&\textbf{1.000}&0.995&0.986&0.936&0.518\\
			\hline
			SMR&\underline{0.999}&0.998&0.968&0.981&0.823&0.469&0.335&0.323\\
			\hline
		\end{tabular}
	\end{center}
	\caption{Accuracy of various subspace clustering algorithms on synthetic data with varying subspace dimensions.}
	\label{tab_vq_res}
\end{table}

We observe that WSSR achieves the best performance across all settings, though the accuracy becomes less than perfect when $\subdim_{k}$ is greater than 12. 
This is because the increase of subspace dimensions likely increases the intersection between subspaces hence induces more noise. 
When we move our focus to the performance of SSC-based methods, we see that both SSC and S3C also have near perfect clustering accuracy for $\subdim_{k}$ up to 12. Then the results get slightly worse for higher values of $\subdim_{k}$. For $\subdim_{k}$ less than 16, the results of ASSC are much better than in the previous set of varying noise experiments. This shows that ASSC has the subspace recovery ability when the noise level is small. 
SSC-OMP has worse accuracy than its base method SSC, which could be attributed to its default neighbourhood size. SSR, LSR and SMR maintain good performance throughout when $P_{k}$ is relatively small, however their performance degrades sharply as $P_{k}$ increases.

\section{Experiments on Real Data}\label{sec_wssr_real}

In this section we use real datasets to assess the performance of WSSR with and without side
information. As mentioned at the end of Section~\ref{sec_wssr_si}, we refer to
WSSR with side information as WSSR+. We evaluate the following aspects of
our proposed framework: (1) How does WSSR
compare with other state-of-the-art subspace clustering methods? (2) How does
the performance of WSSR+ compare with other constrained clustering methods? (3)
What are the additional benefits of the active learning component?

\subsection{Experiments on MNIST}

In this section, we conduct experiments comparing WSSR with other
state-of-the-art subspace clustering algorithms on the MNIST handwritten digits
data \citep{lecun1998gradient}. This database contains greyscale images of
handwritten digits numbered from 0 to 9, and has been widely used in the machine
learning literature to benchmark the performance of supervised and unsupervised
learning methods.
In the context of subspace clustering 
\cite{you2016scalable} used the MNIST data set to demonstrate the effectiveness of SSC-OMP.
%
%

We compare the performance of WSSR to that of SSC-OMP, 
SSC \citep{elhamifar2013sparse}, ASSC \citep{li2018geometric}, S3C
\citep{li2015structured}, LSR \citep{lu2012robust}, SMR \citep{hu2014smooth},
and FGNSC \citep{yang2019subspace}. 
We use the default parameter settings for SSC, ASSC, and SSC-OMP. For
LSR we use the standard version of the algorithm that includes
the constraint that the diagonal entries of the coefficient matrix must be
zero. 
%
The default parameter setting
for SMR uses $k=4$ for the $k$-nearest neighbour graph. We adopt the default
setting of FGNSC, which uses SMR as the base algorithm to obtain the initial
affinity matrix.  In WSSR, we set $k=10$ and $\rho=0.01$. That is, we consider a
10-nearest neighbourhood for each data point, which is the same as the default
setting in SSC-OMP. 


The original MNIST data set contains 60,000 points in $P=3472$ dimensions organised
in 10 clusters (digits 0
to 9). We conduct two sets of
experiments on this data. The first set of experiments investigates the effect of
the number of clusters $\ncluster$ on the performance of various
algorithms. We randomly select $K\in\left\{2, 3, 5, 8, 10 \right\}$ clusters
out of the 10 digits. Each cluster contains 100 randomly sampled points, and
the full dataset is then projected onto 200 dimensions using PCA. The second
set of experiments explores the effect of the total number of data points $N$ on the
performance, following the experimental design proposed by \cite{you2016scalable}.
%
%
Specifically, we randomly select $N_{k}\in\left\{50, 100, 200, 400,
600\right\}$ points from each cluster out of all 10 clusters. The full data are
then projected onto 500 dimensions using PCA. 
For the each choice of $K$ and $N_k$ we
randomly sample 20 data sets and apply all methods
on these 20 sets.
Table~\ref{tab_mnist_vk} reports the 
median and standard deviation of clustering accuracy
of each algorithm. 

\begin{table}[h!]
	\begin{center}
		\begin{tabular}{l| cc cc cc cc cc}
			\hline
			&\multicolumn{2}{c}{$K=2$} &\multicolumn{2}{c}{$K=3$} &\multicolumn{2}{c}{$K=5$} &\multicolumn{2}{c}{$K=8$} &\multicolumn{2}{c}{$K=10$}\\
			&Med&Std&Med&Std&Med&Std&Med&Std&Med&Std\\
			\hline
			WSSR&\textbf{1.00}&0.01&\textbf{1.00}&0.01&\textbf{0.99}&0.01&\textbf{0.98}&0.01&\textbf{0.98}&0.02\\
			\hline 
			SSR&0.95&0.09&0.89&0.11&0.68&0.11&0.64&0.06&0.60&0.05\\
			\hline
			SSC &\underline{0.99}&0.03&0.91&0.05&0.80&0.08&0.78&0.03&0.81&0.02\\ 
			\hline
			S3C&\underline{0.99}&0.02&$\underline{0.97}$&0.08&0.79&0.08&0.82&0.04&0.81&0.03\\
			\hline
			ASSC &\underline{0.99}&0.01&$\underline{0.97}$&0.02&\underline{0.95}&0.05&0.87&0.06&0.84&0.04\\
			\hline
			SSC-OMP &0.98&0.01&$\underline{0.97}$&0.02&\underline{0.95}&0.04&\underline{0.90}&0.04&\underline{0.86}&0.03\\ 
			\hline
			LSR &0.98&0.11&0.68&0.11&0.86&0.07&0.82&0.04&0.78&0.02\\
			\hline
			SMR&\underline{0.99}&0.04&0.98&0.02&\underline{0.95}&0.06&0.86&0.05&0.83&0.03\\
			\hline
			FGNSC &\underline{0.99}&0.06&0.98&0.03&0.83&0.09&0.84&0.04&0.85&0.04\\
			\hline
		\end{tabular}
	\end{center}
	\caption{Median clustering accuracy along with the standard deviations on the MNIST handwritten digits data across 20 replications with varying number of clusters.}
	\label{tab_mnist_vk}
\end{table}

%
It can be seen that WSSR achieves the best performance across all settings. In particular, it achieves perfect clustering accuracy for both $K=2$ and 3. It is also worth noting that the performance variability is relatively small as compared to other methods. 
For the competing algorithms, we see that a few algorithms have excellent performance when $K$ is small. However the performance degrades in general with the increase of $K$. This is especially the case with SSR, which performs poorly even when $K$=5. 
Both SSC and S3C have very similar performance to each other, with S3C having slightly higher accuracy scores on two scenarios. 
The performance of ASSC is higher than both of the previous two, and has similar performance to that of SSC-OMP. However, SSC-OMP has an obvious advantage over the other SSC-based methods when $K$ is large. 
The median performance of LSR seems to be the worst out of all methods, and is the most variable especially when $K$ is small.
SMR and FGNSC have the same level of performance when $K=2$ and 3, which is not surprising given that the affinity matrix of FGNSC is adapted from that of SMR.
However, the performance of FGNSC is less stable for larger values of $K$. It has been observed in our experiments that the performance of FGNSC is highly sensitive to the parameter values.

The performance results for the second set of experiments with varying number of points per cluster are reported in Table~\ref{tab_mnist_vn}. 
WSSR is again the best performing algorithm across all settings, and indeed the most stable of all as well. 
It can be seen that SSC-OMP is the algorithm with the second best performance in most settings. The performance of SSC-OMP reported here is consistent with what is reported in \citep{you2016scalable}.
In general, the cluster performance improves with increasing number of points. Although this increase does not seem to provide an obvious performance boost for SSR.
At the same time, the performance for SSC, ASSC, and LSR also become more stable as $N$ increases, which is not the case for SMR.  
With that said, FGNSC provides performance improvement in addition to the performance of SMR in most scenarios. 
\begin{table}[h!]
	\begin{center}
		\begin{tabular}{l| cc cc cc cc cc}
			\hline
			&\multicolumn{2}{c}{$N=500$} &\multicolumn{2}{c}{$N=1000$} &\multicolumn{2}{c}{$N=2000$} &\multicolumn{2}{c}{$N=4000$} &\multicolumn{2}{c}{$N=6000$}\\
			&Med&Std&Med&Std&Med&Std&Med&Std&Med&Std\\
			\hline
			WSSR &\textbf{0.96}&0.03&\textbf{0.98}&0.00&\textbf{0.98}&0.00&\textbf{0.99}&0.00&\textbf{0.99}&0.00\\
			\hline 
			SSR &0.55&0.04&0.61&0.05&0.61&0.04&0.60&0.03&0.60&0.04\\ 
			\hline
			SSC &0.78&0.04&0.81&0.03&0.82&0.02&0.83&0.01&0.84&0.01\\ 
			\hline
			S3C &0.78&0.04&0.82&0.04&0.82&0.03&0.83&0.02&0.83&0.02\\
			\hline
			ASSC &0.82&0.04&0.85&0.04&0.83&0.02&0.82&0.01&0.83&0.01\\
			\hline
			SSC-OMP &\underline{0.83}&0.04&\underline{0.88}&0.03&\underline{0.91}&0.02&\underline{0.92}&0.03&0.92&0.04\\ 
			\hline
			LSR &0.66&0.04&0.74&0.03&0.78&0.02&0.79&0.01&0.80&0.01\\
			\hline
			SMR&0.76&0.03&0.82&0.05&0.88&0.04&0.86&0.04&0.92&0.04\\
			\hline
			FGNSC &0.77&0.03&0.85&0.03&0.87&0.04&0.87&0.04&\underline{0.97}&0.02 \\
			\hline
		\end{tabular}
	\end{center}
	\caption{Median clustering accuracy along with the standard deviations on the MNIST handwritten digits data across 20 replications with varying number of points per cluster.}
	\label{tab_mnist_vn}
\end{table}


All experiments were performed on a cloud computing machine with 5 CPU cores
and 15GB of RAM. For each of the two sets of experiments on the MNIST data set,
we present a comparison of the median computational times in log-scale for
different algorithms.  
%
%
%
Both figures indicate that S3C is the most computationally intensive method.
SSC and ASSC have similar computational times because they are based on the
same optimisation framework.  SSC-OMP and LSR are the most efficient out of all
methods. SSC-OMP is the SSC variant that is most suitable for large-scale
problems, while LSR has a closed-form solution.  The computational time for
WSSR is comparable to that of SSC and ASSC for small
problems, while the comparison becomes more favourable to WSSR when as $N$ and
$K$ increase. In particular from Fig.~\ref{fig_mnist_times_b} we see that for
larger instances ($N \geqslant 200$) WSSR is the third fastest algorithm,
while its running time is almost equal to that of LSR when $N=600$.
\begin{figure}[htbp!]
\centering
\begin{subfigure}{.49\textwidth}
	\includegraphics[width=1.1\textwidth, height=1.1\textwidth]{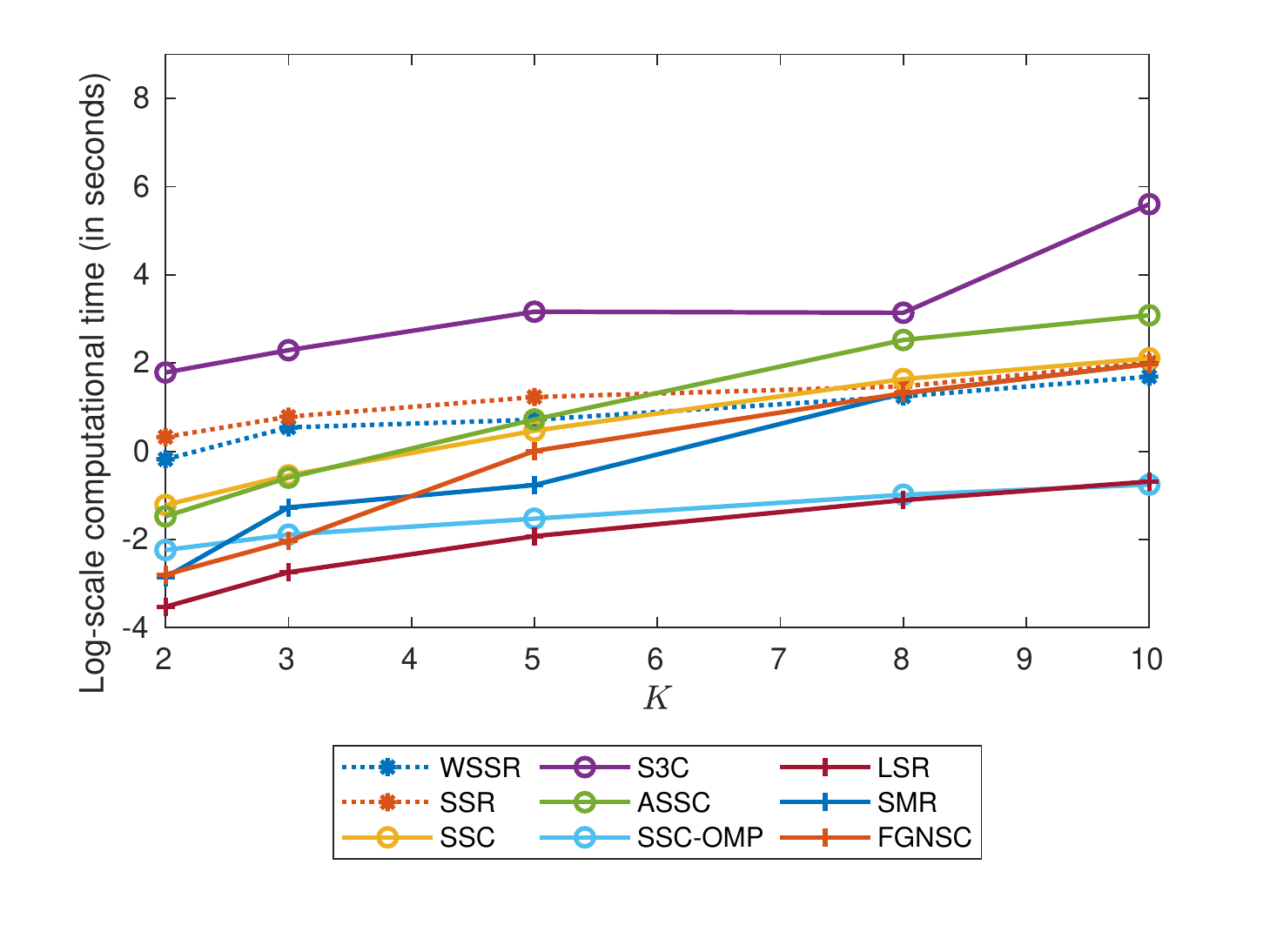}
	\caption{Varying number of clusters ($K$).}
	\label{fig_mnist_times_a}
\end{subfigure}
\begin{subfigure}{.49\textwidth}
	\includegraphics[width=1.1\textwidth, height=1.1\textwidth]{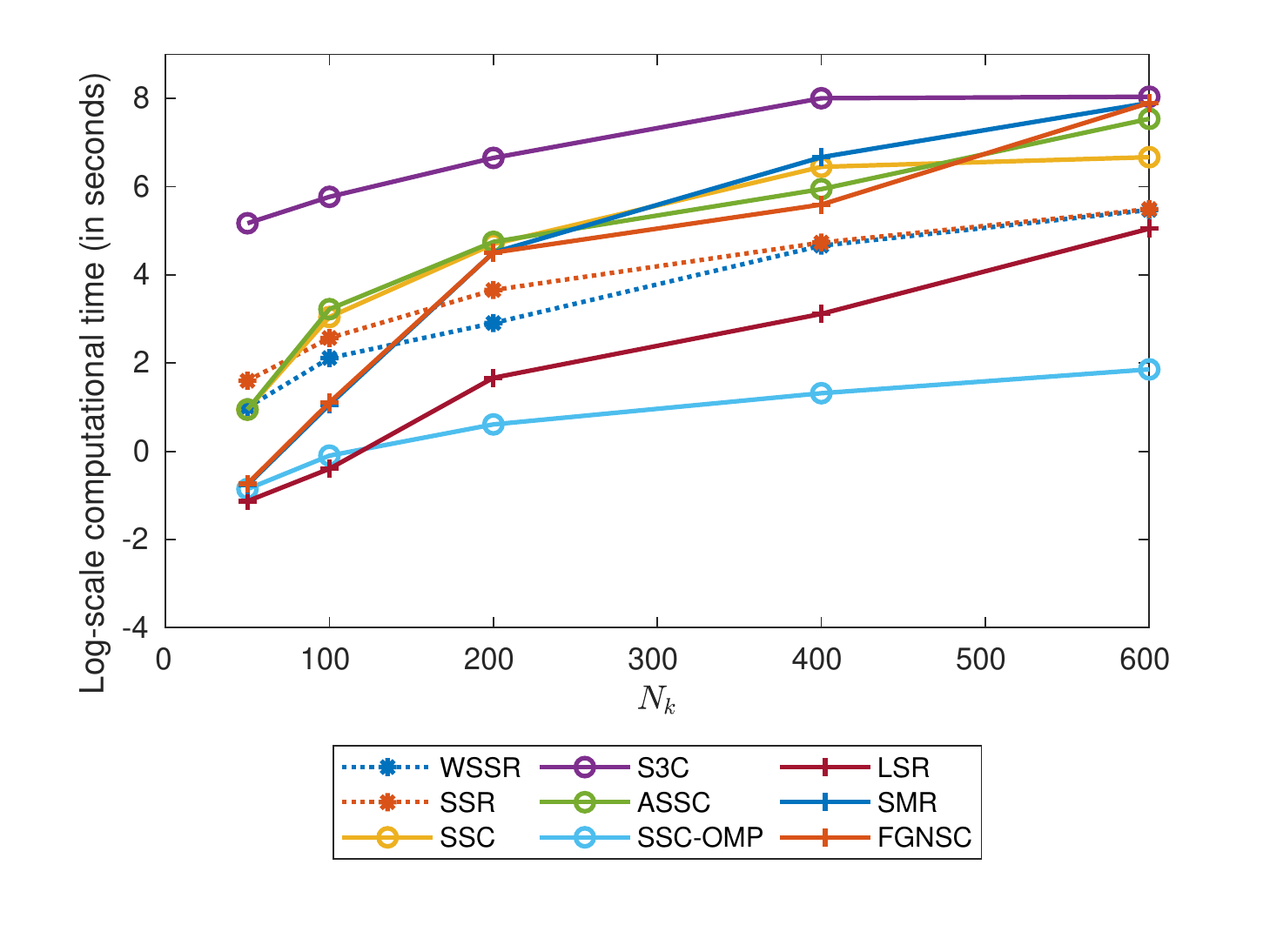}
	\caption{Varying number of points per cluster ($N_{k}$).}
	\label{fig_mnist_times_b}
\end{subfigure}
\caption{Median running times (in log-scale of seconds) of different algorithms on the MNIST handwritten digits data.}
\end{figure}

 

\subsection{Experiments on USPS}

In this subsection, we evaluate the performance of different subspace
clustering methods on the USPS digits data \citep{hull1994database}. USPS
is another widely used benchmark data set, which has been used to demonstrate the
effectiveness of subspace clustering methods \citep{hu2014smooth,
yang2019subspace}. USPS consists of 9298 images of handwritten digits that range from
0 to 9, and each image contains 16$\times$16 pixels. We follow the exact same
experimental settings as in \cite{hu2014smooth}, which uses the first 100
images from each digit.
\begin{table}[h!]
	\begin{center}
		\begin{tabular}{l| cc cc cc cc cc}
			\hline
			&\multicolumn{2}{c}{$K=2$} &\multicolumn{2}{c}{$K=3$} &\multicolumn{2}{c}{$K=5$} &\multicolumn{2}{c}{$K=8$} &\multicolumn{2}{c}{$K=10$}\\
			&Med&Std&Med&Std&Med&Std&Med&Std&Med&Std\\
			\hline
			WSSR&\textbf{1.00}&0.01&\textbf{0.99}&0.01&\textbf{0.98}&0.01&\textbf{0.97}&0.00&\textbf{0.97}&0.00\\
			\hline 
			SSR&0.96&0.10&0.95&0.08&0.80&0.09&0.76&0.06&0.69&0.01\\
			\hline
			SSC &0.96&0.03&0.85&0.12&0.63&0.14&0.60&0.05&0.51&0.00\\ 
			\hline
			S3C &0.97&0.02&0.83&0.13&0.69&0.09&0.62&0.05&0.51&0.01\\
			\hline
			ASSC&0.97&0.02&0.83&0.12&0.61&0.14&0.62&0.06&0.58&0.00\\
			\hline
			SSC-OMP&0.95&0.04&0.74&0.11&0.42&0.09&0.41&0.09&0.35&0.01\\ 
			\hline
			LSR&0.71&0.14&0.63&0.13&0.65&0.05&0.55&0.04&0.53&0.01\\
			\hline
			SMR&\underline{0.99}&0.04&\underline{0.98}&0.02&\underline{0.95}&0.06&\underline{0.86}&0.05&0.83&0.03\\
			\hline
			FGNSC &\underline{0.99}&0.02&\underline{0.98}&0.05&\underline{0.95}&0.06&0.83&0.05&\underline{0.85}&0.02\\
			\hline
		\end{tabular}
	\end{center}
	\caption{Median clustering accuracy along with the standard deviations on the USPS data across 20 replications.}
\end{table}

We investigate the performance of various algorithms under varying number of
clusters $K$. All experiments are conducted for 20 replications, and we report both
the median and standard deviation of the clustering accuracy. For $K$ from 2 to
8, we randomly sample data from $K$ digits.  
Therefore the variability
in the cluster performance comes from both the variability in the subset of the
data, and the variability of the corresponding algorithm. For $K=10$, we use
the same data set with 1000 images across all replications. In this case, the
standard deviation reflects only the variability of the algorithms. 

%
%
SSC, S3C, and ASSC exhibit similar performance for all values of $K$. 
On the USPS data set, the performance of these methods degrades much more as $K$ increases compared to the MNIST data set.
Performance variability is also higher for $K=2,3$ as evinced by the higher values of the reported standard deviations. 
SSC-OMP performs worse than the previous three SSC variants in every case.
%
LSR fails to achieve high accuracy across all
settings, and is it also characterised by a much higher performance variability when
$K$ is small. SSR, SMR and FGNSC have excellent performance when $K$ is small. However, their performance decreases with the increase of $K$, though 
the performance degrade more gradually for SMR and FGNSC than the previously discussed methods.
%
%
WSSR is the best performing method on this data set. It manages to achieve
median accuracy that is close to perfect, and very small performance variability for all values of $K$.

\subsection{WSSR+ Experiments}

In this section, we assess the performance of the proposed framework for constrained clustering in two cases. First, when
a random subset of labelled points is available
at the outset, and second when the
the active learning is used to select which points to label.
As in the previous section we use the MNIST and USPS data sets, and consider
different values of $K$.
For the constrained clustering problem, for each data set and for different
values of $\ncluster$, we obtain the labels of a proportion $p \in \{0.1,0.2,0.3\}$ 
of randomly selected points. 
We compare the performance of WSSR+ to that of Partition Level Constrained
Clustering (PLCC) \citep{liu2018partition} and Constrained Spectral
Partitioning (CSP) \citep{wang2014constrained}.
To ensure a fair comparison, the same initial affinity matrix is used in all
three constrained clustering algorithms. In particular the initial affinity
matrix is the one produced by WSSR.
PLCC involves one tuning parameter, $\lambda$, which controls the weight
assigned on the side information. Although in \cite{liu2018partition} it is
recommended to set $\lambda$ to be above 10,000 for stable performance, we
found that in our experiments this is a poor choice. Instead we sampled 20
random values for $\lambda$ in the range (0,1) and chose the one that
produced the highest clustering accuracy. CSP involves no tuning parameters.

Tables~\ref{tab_mnist_alcc} and~\ref{tab_usps_alcc} report the results for
MNIST and USPS respectively. The columns titled ``Med'' and ``Std'' report the
median and the standard deviation of clustering accuracy for the three
algorithms over 20 replications for different values of $p$.
In both tables there is an additional column labelled AL. This column reports
the performance on the same problems attained if the active learning described
in Section~\ref{sec_wssr_si} is used to select which points should be labelled,
rather than these being selected at random.
A comparison of this active learning 
strategy to those proposed by
\cite{lipor2015margin,lipor2017leveraging} is provided in 
\cite{peng2019subspace}.

\begin{table}[htbp!]
\begin{center}
\begin{tabular}{ lrcc cc cc c cc c}
\hline
\multirow{2}{*}{$K$}&\multirow{2}{*}{Pct.} &\multirow{2}{*}{WSSR}&\multicolumn{3}{c}{WSSR+} &\multicolumn{3}{c}{PLCC} &\multicolumn{3}{c}{CSP} \\
&&&AL&Med&Std&AL&Med&Std&AL&Med&Std\\
\hline
\multirow{3}{*}{$2$} 
&10\%&\multirow{3}{*}{1.00}&1.00&1.00&0.00&1.00&1.00&0.00&1.00&1.00&0.00\\
&20\%&&1.00&1.00&0.00&1.00&1.00&0.00&1.00&1.00&0.00\\
&30\%&&1.00&1.00&0.00&0.94&1.00&0.00&1.00&1.00&0.00\\
\hline
\multirow{3}{*}{$3$} 
&10\%&\multirow{3}{*}{1.00}&1.00&1.00&0.00&1.00&1.00&0.00&0.68&0.99&0.00\\
&20\%&&1.00&1.00&0.00&1.00&1.00&0.00&0.68&1.00&0.00\\
&30\%&&1.00&1.00&0.00&1.00&0.52&0.01&0.68&1.00&0.00\\
\hline
\multirow{3}{*}{$5$}
&10\%&\multirow{3}{*}{1.00}&1.00&1.00&0.00&1.00&0.99&0.00&0.79&0.44&0.17\\
&20\%&&1.00&1.00&0.00&1.00&0.99&0.00&0.80&0.64&0.12\\
&30\%&&1.00&1.00&0.00&0.98&0.45&0.01&0.80&0.99&0.07\\
\hline
\multirow{3}{*}{$8$} 
&10\%&\multirow{3}{*}{0.98}&0.99&0.98&0.00&0.86&0.62&0.06&0.97&0.44&0.17\\
&20\%&&0.99&0.98&0.00&0.86&0.55&0.05&0.99&0.65&0.11\\
&30\%&&0.99&0.99&0.00&0.85&0.80&0.05&0.99&0.89&0.09\\
\hline
\multirow{3}{*}{$10$}
&10\%&\multirow{3}{*}{0.98}&0.98&0.98&0.00&0.79&0.78&0.05&0.97&0.38&0.13\\
&20\%&&0.99&0.99&0.00&0.77&0.82&0.06&0.98&0.52&0.11\\
&30\%&&0.99&0.99&0.00&0.80&0.46&0.02&0.88&0.78&0.10\\
\hline
\end{tabular}
\end{center}
\caption{Clustering accuracy of various constrained clustering methods on the MNIST data. The initial affinity matrix for all methods is produced by WSSR. The column AL reports performance when the corresponding proportion of labelled
	points are selected through active learning rather than randomly.}
\label{tab_mnist_alcc}
\end{table}

\begin{table}[htbp!]
	\begin{center}
		\begin{tabular}{ lrcc cc cc c cc c}
			\hline
			\multirow{2}{*}{$K$}&\multirow{2}{*}{Pct.} &\multirow{2}{*}{WSSR}&\multicolumn{3}{c}{WSSR+} &\multicolumn{3}{c}{PLCC} &\multicolumn{3}{c}{CSP}\\
			&&&AL&Med&Std&AL&Med&Std&AL&Med&Std\\
			\hline
			\multirow{3}{*}{$2$} 
			&10\%&\multirow{3}{*}{1.00}&1.00&1.00&0.00&1.00&1.00&0.00&1.00&0.99&0.01\\
			&20\%&&1.00&1.00&0.00&1.00&1.00&0.00&1.00&1.00&0.00\\
			&30\%&&1.00&1.00&0.00&1.00&1.00&0.00&1.00&1.00&0.00\\
			\hline
			\multirow{3}{*}{$3$} 
			&10\%&\multirow{3}{*}{0.99}&0.99&0.99&0.00&1.00&0.99&0.00&0.99&0.98&0.13\\
			&20\%&&0.99&0.99&0.00&1.00&0.99&0.00&1.00&0.99&0.00\\
			&30\%&&1.00&0.99&0.00&1.00&0.52&0.02&1.00&0.99&0.00\\
			\hline
			\multirow{3}{*}{$5$}
			&10\%&\multirow{3}{*}{0.97}&0.98&0.97&0.00&0.97&0.97&0.00&0.98&0.35&0.11\\
			&20\%&&0.98&0.97&0.00&0.98&0.98&0.01&0.70&0.52&0.10\\
			&30\%&&0.98&0.98&0.00&0.89&0.45&0.01&0.98&0.80&0.13\\
			\hline
			\multirow{3}{*}{$8$} 
			&10\%&\multirow{3}{*}{0.97}&0.97&0.97&0.00&0.82&0.54&0.07&0.80&0.31&0.12\\
			&20\%&&0.98&0.97&0.00&0.81&0.50&0.04&0.96&0.47&0.11\\
			&30\%&&0.98&0.98&0.00&0.79&0.75&0.03&0.97&0.73&0.08\\
			\hline
			\multirow{3}{*}{$10$}
			&10\%&\multirow{3}{*}{0.97}&0.97&0.97&0.00&0.65&0.85&0.05&0.97&0.35&0.13\\
			&20\%&&0.97&0.97&0.00&0.73&0.85&0.06&0.96&0.49&0.11\\
			&30\%&&0.98&0.98&0.00&0.73&0.48&0.01&0.88&0.74&0.10\\
			\hline
		\end{tabular}
	\end{center}
\caption{Clustering accuracy of constrained clustering algorithms on the USPS data. The initial affinity matrix for all methods is produced by WSSR. The column AL reports performance when the corresponding proportion of labelled
	points are selected through active learning rather than randomly.}
	\label{tab_usps_alcc}
\end{table}

For the MNIST data set when $K$ is in the range $[2,5]$,
%
%
WSSR (without any label information) produces a perfect clustering.  For theses
cases, we inspect whether various constrained clustering algorithms retain this
performance after additional class information becomes available. WSSR+
accommodates the label information for all values of $p$ without degrading
accuracy on the rest of the data.  This is not the case for PLCC and CSP. The
performance difference between the constrained version of WSSR+ and the other
two constrained clustering algorithms becomes more pronounced when $K=8,10$.
The performance of neither CSP nor PLCC is guaranteed to increase as the
proportion of labelled points increases, and both algorithms exhibit higher
variability compared to WSSR+. The performance of PLCC and CSP is considerably
improved if the points whose label information is avaiable are chosen through
the active learning strategy of~\cite{peng2019subspace} compared to random
sampling.  This benefit is smallest for the WSSR+ since the algorithm achieves
near perfect accuracy for all values of $K$ on the MNIST data set.


Similar conclusions can be drawn from Table \ref{tab_usps_alcc} which reports
performance on the USPS data set. Again the initial cluster assignment produced by
the fully unsupervised WSSR algorithm is very accurate. Using a subset of
randomly chosen labelled points never reduces the performance of WSSR+,
whose performance is also very stable. This is not the case for CSP
and PLCC whose median performance can decline substantially and exhibit
considerable variability. As before using the active learning strategy
improves performance considerably for CSP and PLCC. Due to the almost
perfect performance of the constrained version of WSSR+ algorithm, the performance gain of using active learning in WSSR+ is marginal.

\section{Conclusions \& Future Work}\label{wssr_conclusions}

In this work, we proposed a subspace clustering method called Weighted Sparse Simplex Representation (WSSR), which relies on estimating
an affinity matrix by approximating each data point as a sparse
convex combination of nearby points.
We derived a lemma that provides a lower bound which can be used to select the only critical penalty
parameter in our formulation. Experimental results show that WSSR 
is competitive with state-of-the-art subspace clustering methods.
%
%
We also extended this approach to the problem of constrained clustering
(where external information about the actual
assignment of some points is available); and to active learning 
(in which case choosing which points will be queried is part of the learning
problem). The active learning approach we adopt aims to query points
whose label information can maximally improve the quality of the
overall cluster assignment.
The constrained clustering approach combines the strengths of spectral methods
and constrained $K$-subspace clustering and ensures that the resulting
clustering is consistent with all the label information available.
Experiments conducted on both synthetic and real data sets demonstrate the effectiveness
of our proposed methodology in all three scenarios: (a) subspace clustering without any side
information; (b) constrained subspace clustering with fixed side information; and (c) active learning.

%

%


In this work we focused on subspace clustering.  
For this problem the inverse cosine similarity is a natural 
proximity measure after projecting the data onto the unit sphere.
It would be interesting to explore other affinity measures that can potentially be used 
to capture data from affine subspaces and manifolds.

%
%


\newpage
\appendix
\section{KKT Conditions for Optimality}\label{appd:kkt_cond}
In this section, we derive the Karush-Kuhn-Tucker (KKT) conditions for our proposed WSSR problem formulation. For any optimisation problem with differentiable objective and constraint functions for which strong duality holds, the KKT conditions are necessary and sufficient conditions for obtaining the optimal solution~\citep{boyd2004convex}. 

Firstly, the stationarity condition in the KKT conditions states that when optimality is achieved, the derivative of the Lagrangian with respect to $\bm{\beta}_{i}$ is zero. The Lagrangian $L\left(\bm{\beta}_{i};\lambda_{i},\bm{\mu}_{i}\right)$ associated with the WSSR problem in~\eqref{eq_wssr_en} can be expressed as
\begin{equation}
L(\bm{\beta}_{i};\lambda_{i},\bm{\mu}_{i})=
\frac{1}{2} \bm{\beta}_{i}^{\tp}\left(\Xmi^\tp \Xmi + \ridgepen D_\mathcal{I}^{\tp}D_\mathcal{I} \right)\bm{\beta}_{i}+\left(\rho \weight_{\mathcal{I}}-\Xmi^{\tp}\tx_{i} \right)^{\tp} \bm{\beta}_{i}
-\bm{\mu}_{i}^{\tp}\bm{\beta}_{i}+\lambda_{i}\left(\bm{\beta}_{i}^{\tp}\bm{1} - 1\right), 
\end{equation}
in which $\lambda_{i}$ is a scalar and $\bm{\mu}_{i}$ is a vector of non-negative Lagrange multipliers. 
Thus, the stationarity condition gives the following
\begin{equation}
\nabla L(\bm{\beta}_{i};\lambda_{i},\bm{\mu}_{i})=\left(\Xmi^\tp \Xmi + \ridgepen D_\mathcal{I}^{\tp}D_\mathcal{I} \right)\bm{\beta}_{i}-\Xmi^{\tp}\myx_{i}+\rho\weight_{\mathcal{I}}
+\lambda_{i}\bm{1}-\bm{\mu}_{i}=\bm{0},
\end{equation}
which can be simplified to
\begin{equation}
\bm{\beta}_{i} = \left(\Xmi^\tp \Xmi + \ridgepen D_\mathcal{I}^{\tp}D_\mathcal{I} \right)^{-1}\left(\Xmi^{\tp}\tx_{i} +\bm{\mu}_{i}-\rho \weight_{\mathcal{I}}+\lambda_{i} \bm{1} \right).
\end{equation}
Since all diagonal entries in $\Weight_{\mathcal{I}}$ are positive, the matrix $\left(\Xmi^\tp \Xmi + \ridgepen D_\mathcal{I}^{\tp}D_\mathcal{I}\right)$ is full rank thus invertible. 

Secondly, the KKT conditions state that any primal optimal $\bm{\beta}_{i}$ must satisfy both the equality and inequality constraints in~\eqref{eq_wssr_en}. In addition, any dual optimal $\lambda_{i}$ and $\bm{\mu}_{i}$ must satisfy the dual feasibility constraint $\bm{\mu}_{i} \geqslant \bm{0}$.
Thirdly, the KKT conditions state that $\mu_{ij}\beta_{ij}=0$ for all $j\in\mathcal{I}$ for any primal optimal $\bm{\beta}_{i}$ and dual optimal $\bm{\mu}_{i}$ when strong duality holds. This is called the complementary slackness condition.
To put everything together, when strong duality holds, any primal optimal $\bm{\beta}_{i}$ and any dual optimal $\lambda_{i}$ and $\bm{\mu}_{i}$ must satisfy the following KKT conditions:
\begin{align*}
& \text{Stationarity:  } \bm{\beta}_{i} = \left(\Xmi^\tp \Xmi + \ridgepen D_\mathcal{I}^{\tp}D_\mathcal{I} \right)^{-1}\left(\Xmi^{\tp}\tx_{i} +\bm{\mu}_{i}-\rho \weight_{\mathcal{I}}+\lambda_{i} \bm{1} \right),\\
& \text{Equality constraint:  } \bm{\beta}_{i}^{\tp}\bm{1} = 1, \\
&\text{Inequality constraint:  } \bm{\beta}_{i} \geqslant \bm{0},\\
& \text{Dual feasibility: } \bm{\mu}_{i} \geqslant \bm{0}, \\
& \text{Complementary slackness: } \mu_{ij}\beta_{ij}=0,\quad \forall\; j\in\mathcal{I}.
\label{eq_kkt}
\end{align*}

\section{Necessary and Sufficient Conditions for the Trivial Solution} \label{appd:nec_suf}
In Section \ref{sec_wssr_nec} and \ref{sec_wssr_suf}, we investigate the necessary and sufficient conditions under which the trivial solution is obtained. That is, only the most similar point is chosen and has coefficient one.

\subsection{Necessary Condition for the Trivial Solution}\label{sec_wssr_nec}
Consider the WSSR problem formulation in~\eqref{eq_wssr_en} for a given $\myx\in\mathcal{X}$, which we restate below:
\begin{equation}
\begin{aligned}
&\min_{\bm{\beta}_{i}}
& & \frac{1}{2} \myb_i^\tp \left( \Xmi^\tp \Xmi + \ridgepen \Weight_\mathcal{I}^{\tp}\Weight_\mathcal{I}\right) \myb_{i} + \left(\rho \weight_{\mathcal{I}}- \Xmi^\tp \tx_i\right)^\tp \myb_i\\
& \text{s.t.}
& & \bm{\beta}_{i}^{\tp}\bm{1} = 1,\quad \bm{\beta}_{i} \geqslant \bm{0}.
\end{aligned}
\label{eq_wssr_qp_copy}
\end{equation}
Without loss of generality, we assume that 
$\Xmi = \left[ \tx_{(1)}, \tx_{(2)}, \ldots, \tx_{(|\mathcal{I}|)}\right]$ 
where $\tx_{(k)}$~($k\in\left\{1,2,\ldots,|\mathcal{I}| \right\}$) is the $k$-th nearest neighbour of $\bx_{i}$ that lies on the perpendicular hyperplane of $\bx_{i}$. Similarly $\weight_{\mathcal{I}} = \text{diag}(\Weight_{\mathcal{I}})=\left[\weighti_{(1)}, \weighti_{(2)}, \ldots, \weighti_{(|\mathcal{I}|)}\right]^{\tp}$.
Let $\bm{\beta}_{i}^{\star}$ denote the optimal solution to~\eqref{eq_wssr_qp_copy}, we establish the necessary condition for the trivial solution that $\|\bm{\beta}_{i}^{\star}\|_{\infty}=1$ 
in Proposition~\ref{proof_wssr_nec}.

\begin{proposition}
	Assume the nearest neighbour of $\tx_{i}$ ($\tx_{i}=\bx_{i}$) is unique, i.e. $\tx_{i}^{(1)}\neq \tx_{i}^{(j)}$ for $(j)\neq (1)$. If the solution of the WSSR problem in~\eqref{eq_wssr_qp_copy} is given by $\bm{\beta}_{i}^\star = \bm{e}_1 = \left[1,0,\ldots,0 \right]^{\tp}\in\mathbb{R}^{|\mathcal{I}|}$, then the following holds
	\begin{align}
	\rho > \max\left\{0,
	\max_{j \in\left\{2,\ldots, |\mathcal{I}| \right\} } \frac{(\tx_{i}^{(1)} - \tx_{i}^{(j)})^\tp (\tx_{i}^{(1)} -\bx_{i}) + \ridgepen (\weighti_{i}^{(1)})^{2}}{\weighti_{i}^{(j)} - \weighti_{i}^{(1)}} \right\}.
	\label{eq:rho1}
	\end{align}
\end{proposition}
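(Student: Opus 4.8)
The plan is to characterise optimality of the vertex $\bm{e}_1$ through the first-order condition for a convex program over the simplex, and then read off the constraint this places on $\rho$. Since $\ridgepen>0$ the quadratic $f(\myb_i)=\frac{1}{2}\myb_i^\tp Q \myb_i + \bm{b}^\tp\myb_i$, with $Q=\Xmi^\tp\Xmi+\ridgepen\Weight_\I^\tp\Weight_\I$ and $\bm{b}=\rho\weight_\I-\Xmi^\tp\bx_i$, is strictly convex, so $\bm{e}_1$ is the global minimiser over $\Delta^{|\I|}$ if and only if the directional derivative of $f$ is nonnegative along every feasible direction emanating from $\bm{e}_1$. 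Because $\bm{e}_1$ is a vertex of the simplex, the cone of feasible directions is generated by the edge directions $\bm{e}_j-\bm{e}_1$ for $j\in\{2,\ldots,|\I|\}$, and since the directional derivative is linear in the direction it suffices to impose $\nabla f(\bm{e}_1)^\tp(\bm{e}_j-\bm{e}_1)\geqslant 0$ for each such $j$.

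First I would compute $\nabla f(\bm{e}_1)=Q\bm{e}_1+\bm{b}$ component-wise. The $j$-th entry of $\Xmi^\tp\Xmi\,\bm{e}_1$ is $(\tx_i^{(j)})^\tp\tx_i^{(1)}$, the diagonal matrix $\Weight_\I^\tp\Weight_\I$ contributes $\ridgepen(\weighti_i^{(1)})^2$ only to the first entry, and $\bm{b}$ contributes $\rho\weighti_i^{(j)}-(\tx_i^{(j)})^\tp\bx_i$. Subtracting the first component from the $j$-th and collecting terms, the inequality $\nabla f(\bm{e}_1)^\tp(\bm{e}_j-\bm{e}_1)\geqslant 0$ becomes
\[
(\tx_i^{(j)}-\tx_i^{(1)})^\tp(\tx_i^{(1)}-\bx_i)+\rho\bigl(\weighti_i^{(j)}-\weighti_i^{(1)}\bigr)-\ridgepen(\weighti_i^{(1)})^2\geqslant 0,
\]
which is the single scalar condition that encodes optimality in the direction of the $j$-th neighbour.

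It remains to isolate $\rho$. The crucial step is that the coefficient $\weighti_i^{(j)}-\weighti_i^{(1)}$ multiplying $\rho$ is strictly positive: uniqueness of the nearest neighbour together with the assumed monotone link between Euclidean distance and dissimilarity guarantees $\weighti_i^{(j)}>\weighti_i^{(1)}$ for every $j\neq 1$. Dividing through (and flipping the sign of the inner-product term) yields $\rho\geqslant \bigl[(\tx_i^{(1)}-\tx_i^{(j)})^\tp(\tx_i^{(1)}-\bx_i)+\ridgepen(\weighti_i^{(1)})^2\bigr]/(\weighti_i^{(j)}-\weighti_i^{(1)})$ for each $j$; taking the maximum over $j$ and combining with the standing requirement $\rho>0$ gives the bound in~\eqref{eq:rho1}. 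The main obstacle is precisely the control of this denominator: without uniqueness of $\tx_i^{(1)}$ the term $\weighti_i^{(j)}-\weighti_i^{(1)}$ could vanish and the division would be illegitimate, so the uniqueness and ordering hypotheses are genuinely needed. A secondary point to treat carefully is the passage from the weak inequality delivered by the first-order condition to the strict inequality asserted in the proposition, which corresponds to requiring $\bm{e}_1$ to be the trivial solution with strict complementarity (equivalently, on an open range of $\rho$).
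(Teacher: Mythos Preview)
Your approach is essentially identical to the paper's: compute $\nabla f(\bm{e}_1)$, impose nonnegativity of the directional derivatives along the edge directions $\bm{e}_j-\bm{e}_1$, and rearrange to bound $\rho$. If anything, you are more careful than the paper, which does not explicitly justify that $\weighti_i^{(j)}-\weighti_i^{(1)}>0$ before dividing; your observation about the weak-versus-strict inequality is also apt, since the first-order condition only yields $\geqslant$ and the paper's proof does not close that gap either.
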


\begin{proof}
	To establish the above claim, it suffices to show that the directional
	derivative of the objective function at~$\bm{e}_1$ is positive for all feasible
	directions in the unit simplex~$\Delta^{|\mathcal{I}|}$. Without causing confusion, we drop the subscript $i$ in the following proof for ease of notation. Let us denote the objective function value in~\eqref{eq_wssr_qp_copy} as $f(\bm{\beta})$, then the derivative of the objective function is
	\begin{align*}
	\nabla f(\bm{\beta}) 
	& = \left(\Xmi^\tp \Xmi + \ridgepen \Weight_\mathcal{I}^{\tp}\Weight_\mathcal{I}\right) \bm{\beta} + \rho \weight_{\mathcal{I}} - \Xmi^{\tp} \tx = 
	H\bm{\beta} 
	+ \rho 
	\begin{bmatrix} \weighti^{(1)} \\ \weighti^{(2)} \\ \vdots \\  \weighti^{|\mathcal{I}|} 
	\end{bmatrix}
	- 
	\begin{bmatrix} 
	(\tx^{(1)})^{\tp} \tx \\ (\tx^{(2)})^{\tp} \tx \\ \vdots \\  (\tx^{|\mathcal{I}|})^{\tp} \tx 
	\end{bmatrix},
	\end{align*}
	where 
	\begin{align*}
	H=\begin{bmatrix}
	(\tx^{(1)})^{\tp} \tx^{(1)} + \ridgepen (\weighti^{(1)})^{2} & (\tx^{(1)})^{\tp} \tx^{(2)} & \dots & (\tx^{(1)})^{\tp} \tx^{|\mathcal{I}|} \\
	(\tx^{(2)})^{\tp} \tx^{(1)} & (\tx^{(1)})^{\tp} \tx^{(2)} + \ridgepen (\weighti^{(2)})^{2} & \dots & (\tx^{(2)})^{\tp} \tx^{|\mathcal{I}|} \\
	\vdots & \vdots & \ddots & \vdots \\
	(\tx^{|\mathcal{I}|})^{\tp} \myx^{(1)} & (\tx^{|\mathcal{I}|})^{\tp} \tx^{(2)} & \dots  & (\tx^{|\mathcal{I}|})^{\tp} \tx^{|\mathcal{I}|} + \ridgepen (\weighti^{|\mathcal{I}|})^{2} 
	\end{bmatrix}. 
	\end{align*}
	Therefore $\nabla{f}(\bm{e}_1)$ is equal to
	\begin{align*}
	\nabla f(\bm{e}_1) & =
	\begin{bmatrix} 
	(\tx^{(1)})^{\tp} \tx^{(1)} + \ridgepen (\weighti^{(1)})^{2} \\ (\tx^{(2)})^{\tp} \tx^{(1)} \\ \vdots \\ (\tx^{|\mathcal{I}|})^{\tp} \tx^{(1)} 
	\end{bmatrix}
	+ \rho 
	\begin{bmatrix} 
		\weighti^{(1)} \\ \weighti^{(2)} \\ \vdots \\  \weighti^{|\mathcal{I}|} 
	\end{bmatrix} 
	- \begin{bmatrix} 
	(\tx^{(1)})^{\tp} \tx \\ (\tx^{(2)})^{\tp} \tx \\ \vdots \\  (\tx^{|\mathcal{I}|})^{\tp} \tx 
	\end{bmatrix}.
	\end{align*}
	
	The directional derivative of $f$ at point $\bm{\beta}$ in the direction $\bm{e}_{j}$ is given by
	$\nabla{f}(\bm{\beta})^{\tp} \bm{e}_{j}$ for $j\in\left\{1,2,\ldots,|\mathcal{I}| \right\}$. To ensure that the directional derivative of $f$ at $\bm{e}_1$
	towards any feasible direction (that is any direction that retains
	$\bm{\beta}$ within the unit simplex $\Delta^{|\mathcal{I}|}$) is positive, it suffices to ensure that
	\begin{equation}
	\nabla{f}(\bm{e}_1)^{\tp} (\bm{e}_j - \bm{e}_1) > 0, \quad \forall \; j\in\left\{2,\ldots,|\mathcal{I}| \right\}.
	\end{equation}
	The above condition holds if the following holds
	\begin{equation}
	\rho > \max_{j\in\left\{2,\ldots,|\mathcal{I}| \right\}} \frac{(\tx^{(1)} - \tx^{(j)})^{\tp} (\tx^{(1)} -\tx) + \ridgepen (\weighti^{(1)})^{2}}{\weighti^{(j)} - \weighti^{(1)}}.
	\label{eq_rho}
	\end{equation}
\eqref{eq:rho1} is obtained by combining the above inequality with
	the requirement that $\rho \geq 0$.
\end{proof}

\subsection{Sufficient Condition for the Trivial Solution}\label{sec_wssr_suf}
Next, we show that~\eqref{eq_rho} is a sufficient condition for the trivial solution $\bm{\beta}_{i}^{\star} = \bm{e}_{1}$.

\begin{proposition}\label{proof_wssr_suff}
	Assume the nearest neighbour of $\tx_{i}$ ($\tx_{i}=\bx_{i}$) is unique, i.e. $\tx_{i}^{(1)}\neq \tx_{i}^{(j)}$ for $(j)\neq (1)$. If the following holds
	\begin{align}
	\rho > 
	\max_{j \in\left\{2,\ldots, |\mathcal{I}| \right\} } \frac{(\tx_{i}^{(1)} - \tx_{i}^{(j)})^{\tp} (\tx_{i}^{(1)} -\tx_{i}) + \ridgepen (\weighti_{i}^{(1)})^{2}}{\weighti_{i}^{(j)} - \weighti_{i}^{(1)}},
	\label{eq_rho_copy}
	\end{align}
	then the solution to~\eqref{eq_wssr_qp_copy} is given by $\bm{\beta}_{i}^{\star} = \bm{e}_1$.
	In addition, if for all $j\in\left\{2,\ldots, |\mathcal{I}| \right\}$ we have \[\left(\bx_{i}^{(j)}-\bx_{i}^{(1)} \right)^{\tp}\left(\bx_{i}-\bx_{i}^{(1)} \right)\leqslant 0,\] 
	then the solution to~\eqref{eq_wssr_qp_copy} is given by $\bm{\beta}_{i}^\star = \bm{e}_1$ for all $\rho>0$.
\end{proposition}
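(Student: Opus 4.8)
The plan is to treat both claims as consequences of the first-order optimality condition for the strictly convex programme~\eqref{eq_wssr_qp_copy}, recycling the directional-derivative computation already done for the necessary condition. Since $\ridgepen>0$ the Hessian $H=\Xmi^\tp\Xmi+\ridgepen\Weight_\I^\tp\Weight_\I$ is positive definite, so $f$ is strictly convex on the simplex and has a unique minimiser. For a convex function on a convex set a feasible point is the global minimiser if and only if its directional derivative is nonnegative along every feasible direction; because any $\myb\in\Delta^{|\I|}$ can be written as $\myb=\sum_{j}\lambda_j\bm{e}_j$ with $\lambda_j\geqslant0$ and $\sum_j\lambda_j=1$, we have $\myb-\bm{e}_1=\sum_{j\geqslant2}\lambda_j(\bm{e}_j-\bm{e}_1)$, so the condition reduces to the edges. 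Hence $\bm{e}_1$ is the unique optimum as soon as $\nabla f(\bm{e}_1)^\tp(\bm{e}_j-\bm{e}_1)>0$ for every $j\in\{2,\ldots,|\I|\}$.

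For the first claim I would simply note that $\nabla f(\bm{e}_1)^\tp(\bm{e}_j-\bm{e}_1)$ is exactly the directional derivative examined in the necessary-condition proof, which was shown there to be positive precisely when $\rho$ exceeds the $j$-th ratio in the maximum of~\eqref{eq_rho} (the denominator $\weighti^{(j)}-\weighti^{(1)}$ being positive by the ordering of neighbours and uniqueness of the nearest one). Taking the maximum over $j$, the stated bound on $\rho$ forces every edge directional derivative to be strictly positive, so by the previous paragraph $\bm{e}_1=\myb^\star$ is the unique optimum.

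For the second claim the crux is to convert the hypothesis, stated on the unit sphere, into a statement about these numerators. Using the stretching construction and the definition $\weighti_{ij}=|\bx_i^\tp\bx_j|^{-1}$ one obtains the clean identities $(\bx^{(k)})^\tp\bx_i=1/\weighti^{(k)}$ and $\tx^{(k)}=\weighti^{(k)}\bx^{(k)}$; substituting these into the geometric part of the numerator gives $(\tx^{(1)}-\tx^{(j)})^\tp(\tx^{(1)}-\bx_i)=\weighti^{(1)}\big(\weighti^{(1)}-\weighti^{(j)}(\bx^{(1)})^\tp\bx^{(j)}\big)$. The hypothesis $(\bx^{(j)}-\bx^{(1)})^\tp(\bx_i-\bx^{(1)})\leqslant0$ rearranges, via the same inner-product identities, to $(\bx^{(1)})^\tp\bx^{(j)}\geqslant1+1/\weighti^{(j)}-1/\weighti^{(1)}$. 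I would then close the argument with the elementary inequality $1+1/\weighti^{(j)}-1/\weighti^{(1)}\geqslant\weighti^{(1)}/\weighti^{(j)}$, which holds since their difference equals $(\weighti^{(1)}-1)(\weighti^{(j)}-\weighti^{(1)})/(\weighti^{(1)}\weighti^{(j)})\geqslant0$ — nonnegative because $\weighti^{(1)}\geqslant1$ (a cosine similarity is at most one) and $\weighti^{(j)}>\weighti^{(1)}$. Chaining these gives $(\bx^{(1)})^\tp\bx^{(j)}\geqslant\weighti^{(1)}/\weighti^{(j)}$, i.e. the geometric part of every numerator is nonpositive, so the bound of the first claim reduces to a nonpositive quantity and every $\rho>0$ qualifies.

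The step I expect to be the main obstacle is the bookkeeping around the ridge term $\ridgepen(\weighti^{(1)})^2$: the geometric hypothesis only makes the \emph{geometric} part of the numerator nonpositive, while the full numerator still carries this strictly positive ridge contribution. The clean ``for all $\rho>0$'' conclusion is therefore exact in the ridgeless case $\ridgepen=0$ and, for the small $\ridgepen$ used in practice, holds up to the negligible correction $\ridgepen(\weighti^{(1)})^2$; I would make this explicit rather than conceal it. The only other care needed is to check the inner-product identities for points with $\bx_i^\tp\bx_j<0$ (where both $\weighti_{ij}$ and the scaling $t^{i}_j$ are negative), and to confirm that the uniqueness of the nearest neighbour guarantees the strict ordering $\weighti^{(j)}>\weighti^{(1)}$ that keeps every denominator positive.
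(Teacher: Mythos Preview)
Your treatment of the first claim is essentially the paper's own argument: both reduce optimality of $\bm{e}_1$ to positivity of the edge directional derivatives $\nabla f(\bm{e}_1)^\tp(\bm{e}_j-\bm{e}_1)$ and then read off the bound on $\rho$. Your explicit appeal to strict convexity and the reduction of arbitrary feasible directions to the edges $\bm{e}_j-\bm{e}_1$ via convex combinations is a slightly cleaner justification than what the paper writes, but the idea is identical.

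For the second claim the two routes genuinely diverge. The paper does not algebraically link the hypothesis on the normalised points $\bx_i^{(j)}$ to the numerators in~\eqref{eq_rho_copy}; instead it argues geometrically that the hyperplane through $\bx_i^{(1)}$ with normal $\bx_i-\bx_i^{(1)}$ is a supporting hyperplane of $\text{conv}(\{\bx_i^{(j)}\})$, and from this concludes informally that no convex combination can approximate $\bx_i$ better than $\bx_i^{(1)}$ alone. Your approach is more quantitative: you use the identities $\tx^{(k)}=\weighti^{(k)}\bx^{(k)}$ and $(\bx^{(k)})^\tp\bx_i=1/\weighti^{(k)}$ to rewrite the hypothesis as $(\bx^{(1)})^\tp\bx^{(j)}\geqslant 1+1/\weighti^{(j)}-1/\weighti^{(1)}$, then compare this to $\weighti^{(1)}/\weighti^{(j)}$ via the factorisation $(\weighti^{(1)}-1)(\weighti^{(j)}-\weighti^{(1)})\geqslant 0$, and so show directly that the geometric part of every numerator in~\eqref{eq_rho_copy} is nonpositive. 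This buys you an explicit inequality chain that the paper's supporting-hyperplane picture only suggests, and it connects the sphere hypothesis to the stretched-point objective, a step the paper's proof leaves implicit. It also surfaces the ridge contribution $\ridgepen(\weighti^{(1)})^{2}$, which neither the statement nor the paper's proof addresses; your decision to flag that the ``for all $\rho>0$'' conclusion is exact only at $\ridgepen=0$ (and holds up to this small correction otherwise) is correct and worth keeping.
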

\begin{proof}
	For the first part of the proposition, if~\eqref{eq_rho_copy} holds, then for all $j$ ($j \in\left\{2,\ldots,|\mathcal{I}| \right\}$) we have
	\begin{align*}
	&\rho > \frac{(\tx_{i}^{(1)} - \tx_{i}^{(j)})^{\tp} (\tx_{i}^{(1)} -\tx_{i}) + \ridgepen (\weighti_{i}^{(1)})^{2}}{\weighti_{i}^{(j)} - \weighti_{i}^{(1)}} \\
	\Leftrightarrow\quad & (\tx_{i}^{(j)})^{\tp}\left(\tx_{i}^{(1)}-\tx_{i} \right)+\rho \weighti_{i}^{(j)} > (\tx_{i}^{(1)})^{\tp}\left(\tx_{i}^{(1)}-\tx_{i} \right)+\ridgepen (\weighti_{i}^{(1)})^{2}+\rho\weighti_{i}^{(1)}\\
	\Leftrightarrow\quad &
	\nabla f(\bm{e}_{1})^{\tp}\bm{e}_{j}>\nabla f(\bm{e}_{1})^{\tp}\bm{e}_{1}\\
	\Leftrightarrow\quad &
	\nabla f(\bm{e}_{1})^{\tp}\left(\bm{e}_{j}-\bm{e}_{1} \right)>0.  
	\end{align*}
	The last line from above means that the directional derivative at $\bm{e}_1$ towards any other feasible direction within the unit simplex $\Delta^{(N-1)}$ is positive. Thus the solution to~\eqref{eq_wssr_qp_copy} is given by $\bm{\beta}^\star = \bm{e}_1$.
	
	For the second part of the proposition, we first provide a geometric interpretation in Figure~\ref{fig_prop2_illu} for the meaning of the statement. 
	\begin{figure}[htbp!]
		\centering
		\includegraphics[width=.5\textwidth]{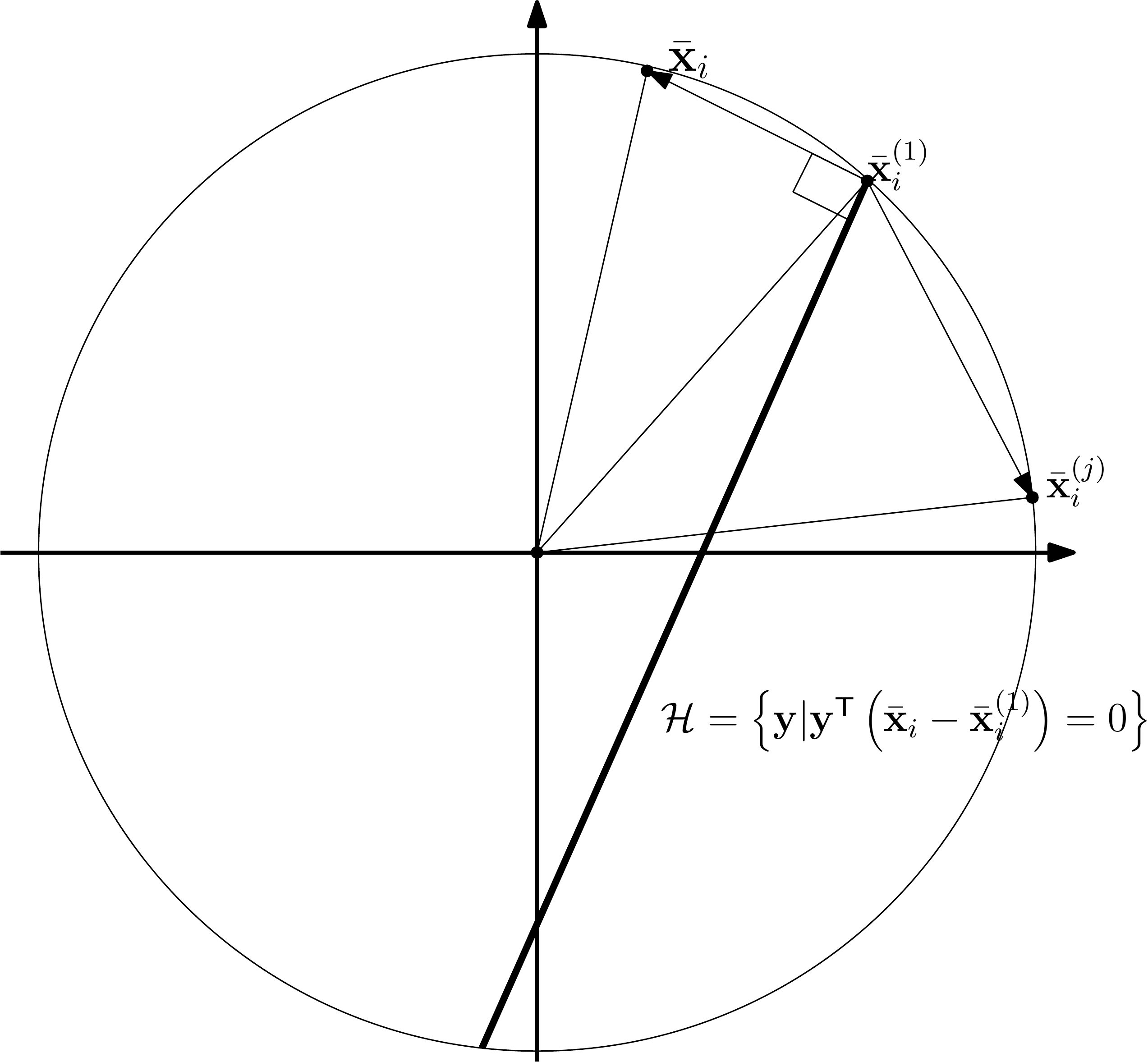}
		\caption{A geometric interpretation for when the trivial solution is obtained.}
		\label{fig_prop2_illu}
	\end{figure}
	In Figure~\ref{fig_prop2_illu}, $\bx_{i}$ is the point to be approximated and $\bx_{i}^{(1)}$ is its nearest neighbour on the unit sphere. The bold black line is the perpendicular hyperplane of $\left(\bx_{i}-\bx_{i}^{(1)} \right)$, which is denoted by $\mathcal{H}=\left\{\bm{y}| \bm{y}^{\tp}\left(\bx_{i}-\bx_{i}^{(1)} \right)=0  \right\}$. 
	
	Assume that all points apart from $\bx_{i}$ and $\bx_{i}^{(1)}$ lie on one side of the hyperplane~$\mathcal{H}$, opposite the side to which $\bx_{i}$ resides in. That is, $\left(\bx_{i}^{(j)}-\bx_{i}^{(1)} \right)^{\tp}\left(\bx_{i}-\bx_{i}^{(1)} \right)\leqslant 0$ for all $j\in\left\{2,\ldots,|\mathcal{I}| \right\}$. We can see that $\mathcal{H}$ is a supporting hyperplane for $\text{conv}(\bar{\mathcal{X}}\backslash \left\{\bx_{i}\right\})$. Consider for an arbitrary point $\bm{y}=Y\bm{\beta}\in\text{conv}(\bar{\mathcal{X}}\backslash \left\{\bx_{i}\right\})$, we have
	\begin{align*}
		&\left(\bx_{i}-\bx_{i}^{(1)} \right)^{\tp}\left(Y\bm{\beta}-\bx_{i}^{(1)} \right)\\
		=&\left(\bx_{i}-\bx_{i}^{(1)} \right)^{\tp}\left[\sum_{j=1}^{|\mathcal{I}|} \beta_{j}\left(\bx_{i}^{(j)}-\bx_{i}^{(1)} \right)  \right]\\
		=&\sum_{j=1}^{|\mathcal{I}|} \beta_{j}\left(\bx_{i}-\bx_{i}^{(1)} \right)^{\tp}
		\left(\bx_{i}^{(j)}-\bx_{i}^{(1)} \right) \\
		\leqslant &0.
	\end{align*} 
	That is, all points apart from $\bx_{i}$ and $\bx_{i}^{(1)}$ lie on one side of the supporting hyperplane $\mathcal{H}=\left\{\bm{y}| \bm{y}^{\tp}\left(\bx_{i}-\bx_{i}^{(1)} \right)=0  \right\}$. That is, $\left(\bx_{i}^{(j)}-\bx_{i}^{(1)} \right)^{\tp}\left(\bx_{i}-\bx_{i}^{(1)} \right)\leqslant 0$. In this case, any linear combination of the column vectors in $Y$ would be further away from $\bx_{i}$ than using $\bx_{i}^{(1)}$ itself as the approximation.

	Therefore, the proposition says if $\left(\bx_{i}^{(j)}-\bx_{i}^{(1)} \right)^{\tp}\left(\bx_{i}-\bx_{i}^{(1)} \right)\leqslant 0$ is satisfied for all $j\in\left\{2,\ldots, |\mathcal{I}| \right\}$, then the trivial solution can be obtained for any $\rho>0$. 
\end{proof}

\section{WSSR+ Experiments on UCI Benchmark Data}
In this section, we conduct further experiments to compare WSSR+ with other state-of-the-art constrained clustering methods on data sets that do not exhibit subspace structure.
The experiments are conducted on four UCI benchmark data sets \citep{Dua:2019}, which have been used previously to demonstrate the effectiveness of constrained spectral clustering methods \citep{wang2014constrained, liu2018partition}. 
A summary of the data characteristics can be found in Table \ref{tab_uci}.
\begin{table}[h!]
	\begin{center}
		\begin{tabular}{cccc}
			\hline
			Data sets& No. of points ($N$)& No. of features ($P$)&No. of clusters ($K$)\\
			\hline
			iris&150&4&3\\
			wine&178&13&3\\
			ecoli&336&343&8\\
			glass&214&9&6\\
			\hline
		\end{tabular}
	\end{center}
	\caption{A summary of the UCI benchmark data sets.}
	\label{tab_uci}
\end{table}

Performance results in terms of clustering accuracy are reported in Table \ref{tab:acsr_exp1} under varying proportions of side information. For each proportion $p\%$ of side information, we run all experiments for 20 replications and report the median clustering accuracy along with the standard deviation. For each data set, the best performance results are highlighted in bold, and the second best performance results are underlined.

\begin{table}[h!]
	\begin{center}
		\begin{tabular}{c ccc cc cc c cc}
			\hline
			\multirow{2}{*}{Data} &\multirow{2}{*}{Pct.} &\multirow{2}{*}{WSSR}&\multicolumn{2}{c}{WSSR+} &\multicolumn{2}{c}{PLCC} &\multicolumn{2}{c}{CSP} &\multicolumn{2}{c}{LCVQE} \\
			&&&Med&Std&Med&Std&Med&Std&Med&Std\\
			\hline
			\multirow{3}{*}{iris} 
			&10\%&\multirow{3}{*}{0.97}&\textbf{0.97}&0.00&\textbf{0.97}&0.00&\underline{0.94}&0.14&0.90&0.01\\
			&20\%&&\underline{0.97}&0.00&\textbf{0.98}&0.01&\textbf{0.98}&0.07&0.92&0.02\\
			&30\%&&\underline{0.98}&0.01&\textbf{0.99}&0.01&\underline{0.98}&0.12&0.93&0.01\\
			\hline
			\multirow{3}{*}{wine} 
			&10\%&\multirow{3}{*}{0.83}&\textbf{0.86}&0.02&\underline{0.85}&0.01&0.68&0.06&0.71&0.02\\
			&20\%&&\textbf{0.88}&0.02&\underline{0.86}&0.01&0.75&0.07&0.73&0.03\\
			&30\%&&\textbf{0.88}&0.02&\underline{0.87}&0.04&0.85&0.05&0.71&0.04\\
			\hline
			\multirow{3}{*}{ecoli} 
			&10\%&\multirow{3}{*}{0.78}&\textbf{0.77}&0.01&0.67&0.06&\underline{0.76}&0.03&\textbf{0.77}&0.03\\
			&20\%&&\underline{0.80}&0.01&\textbf{0.82}&0.02&0.76&0.02&\underline{0.80}&0.05\\
			&30\%&&\textbf{0.81}&0.02&0.71&0.05&0.76&0.02&\underline{0.80}&0.03\\
			\hline
			\multirow{3}{*}{glass} 
			&10\%&\multirow{3}{*}{0.68}&\textbf{0.69}&0.01&\underline{0.67}&0.04&0.65&0.09&0.58&0.02\\
			&20\%&&\textbf{0.69}&0.01&\underline{0.66}&0.01&\textbf{0.69}&0.04&0.60&0.04\\
			&30\%&&\underline{0.70}&0.01&0.60&0.03&\textbf{0.72}&0.04&0.61&0.04\\
			\hline
		\end{tabular}
	\end{center}
	\caption{Clustering accuracy of various spectral-based constrained clustering methods on UCI benchmark data sets.}
	\label{tab:acsr_exp1}
\end{table}

It is worth noting that all the initial clustering accuracy scores achieved by WSSR here are significantly better than that produced by spectral clustering as reported in \cite{liu2018partition}. As a result, the performance results for all three competing methods are much better than what have been previously reported.
With that said, it can be seen that extra side information can have a negative impact on the resulting accuracy. For example, this is the case for PLCC on both \emph{ecoli} and \emph{glass} data sets, in which the performance degrades with the increase of side information.

Overall, WSSR+ has a favourable performance against all three competing methods. In particular, it enjoys the best performance across all side information levels on the \emph{wine} data set and remains one of the top two performers on the remaining data sets. As opposed to the adverse effects that have been observed in the competing methods, the performance of WSSR+ improves consistently with the increase of side information. 
The performance of PLCC appears to be similarly competitive to that of WSSR+. However the standard deviation from PLCC can be comparatively big on data sets such as \emph{ecoli} and \emph{glass}. The performance variability in PLCC undermines its reliability and seemingly high median clustering accuracy. 

To further investigate the stability of various methods, we provide detailed performance visualisations for all methods on all data sets in Figure \ref{fig_uci_vis}. Each plot presents the minimum, median, and maximum performance of each constrained clustering method across 20 replications. The proportion of known class labels $p\%$ range from 0.1 to 1.0, with 1.0 being all class labels are known.  
High variability can be observed from the performance of CSP (on \emph{iris} data set) and LCVQE (on \emph{wine} and \emph{glass} data sets). It also becomes obvious that even for methods that have relatively small variability, such as PLCC and WSSR+, consistent performance improvement is not always achieved on all data sets. In particular, the clustering accuracy of PLCC decreased when the available side information increased to 40\% on the iris data. Taking into account both consistent and stable performance improvement, WSSR+ is competitive against these state-of-the-art methods.

\begin{figure}[ht!]
	\centering
	\includegraphics[width=.48\linewidth, height=.4\textwidth, trim={4cm 10cm 4cm 10cm}]{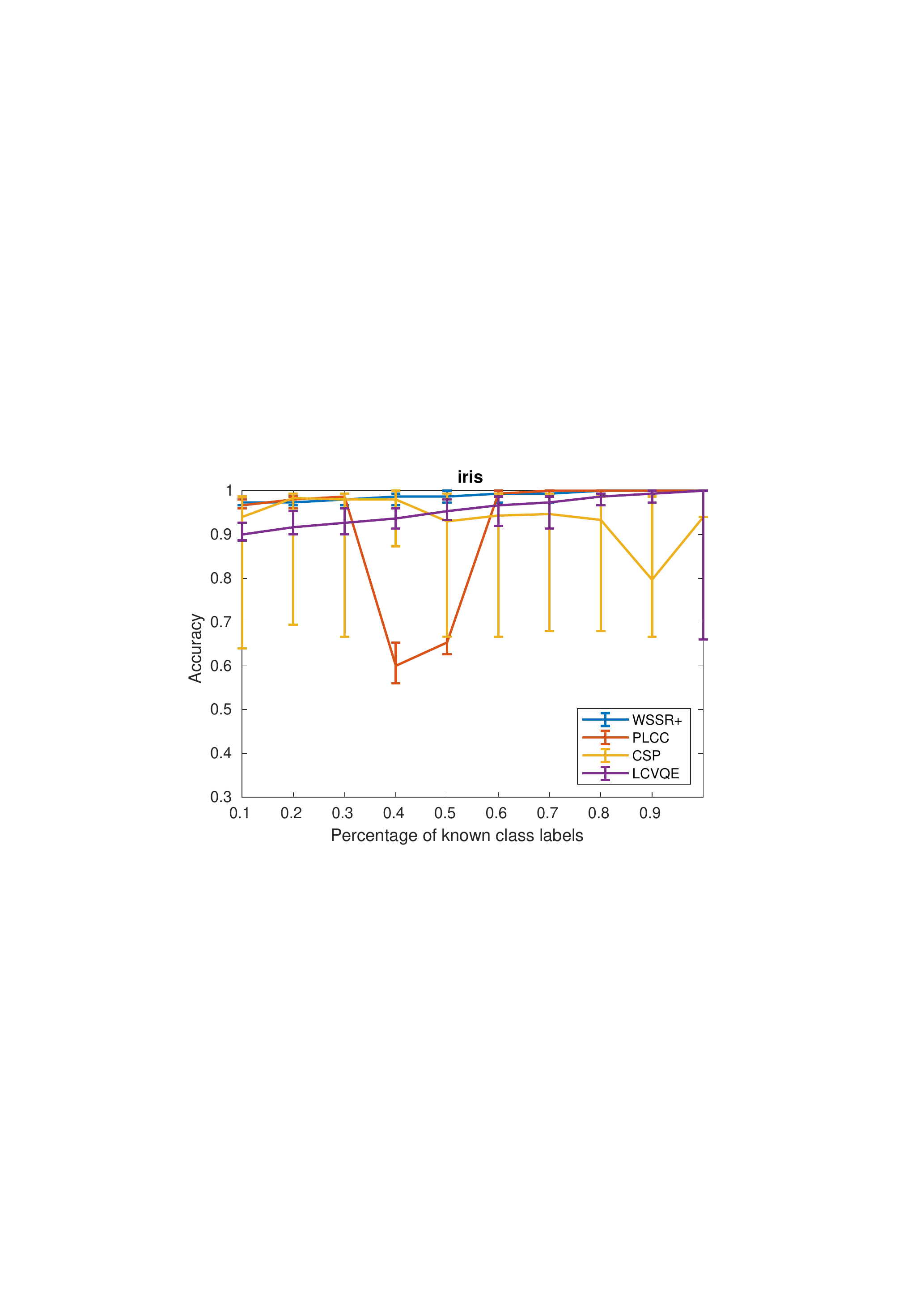}
	\includegraphics[width=.48\linewidth, height=.4\textwidth, trim={4cm 10cm 4cm 10cm}]{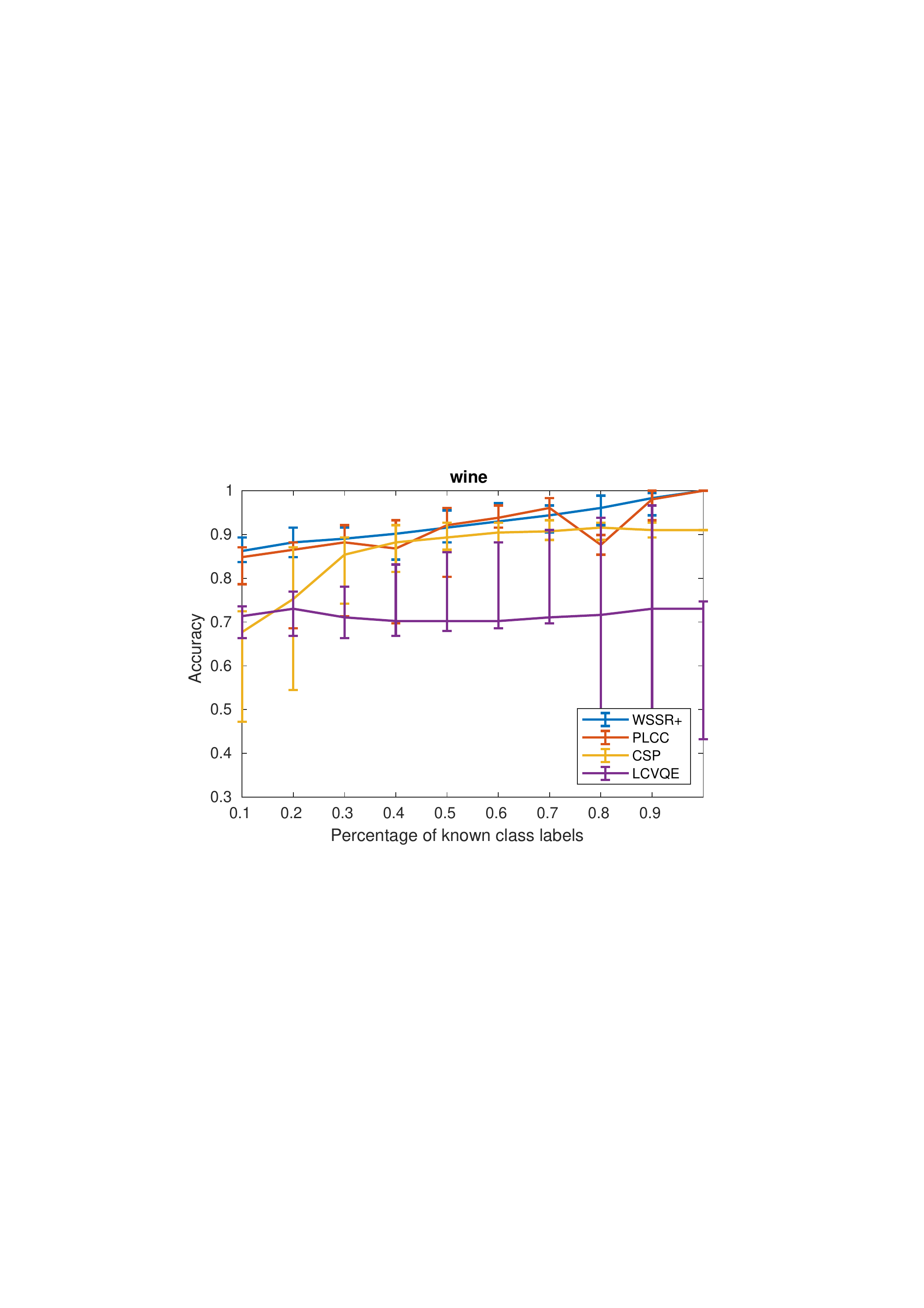}
	\includegraphics[width=.48\linewidth, height=.4\textwidth, trim={4cm 10cm 4cm 10cm}]{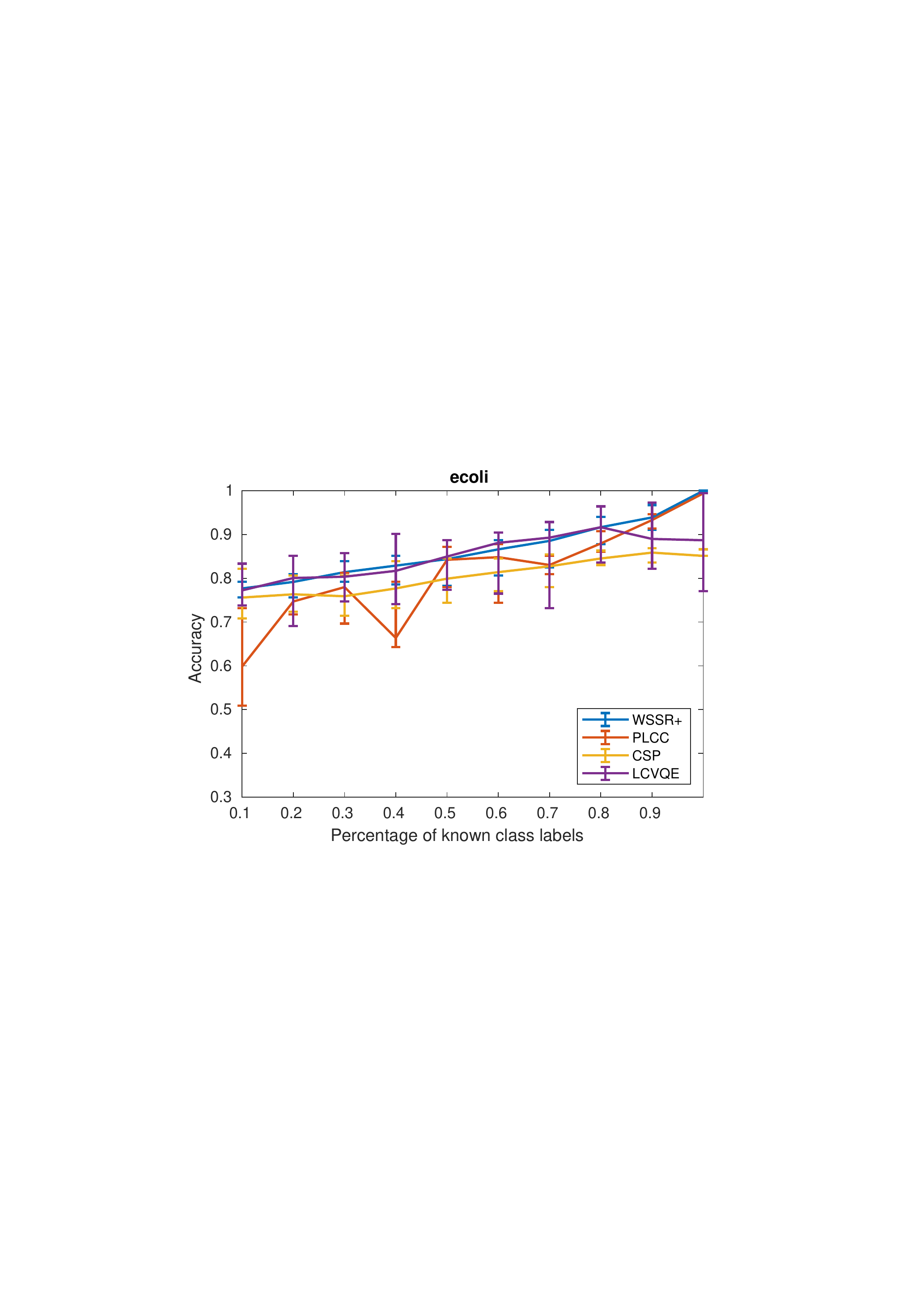}
	\includegraphics[width=.48\linewidth, height=.4\textwidth, trim={4cm 10cm 4cm 10cm}]{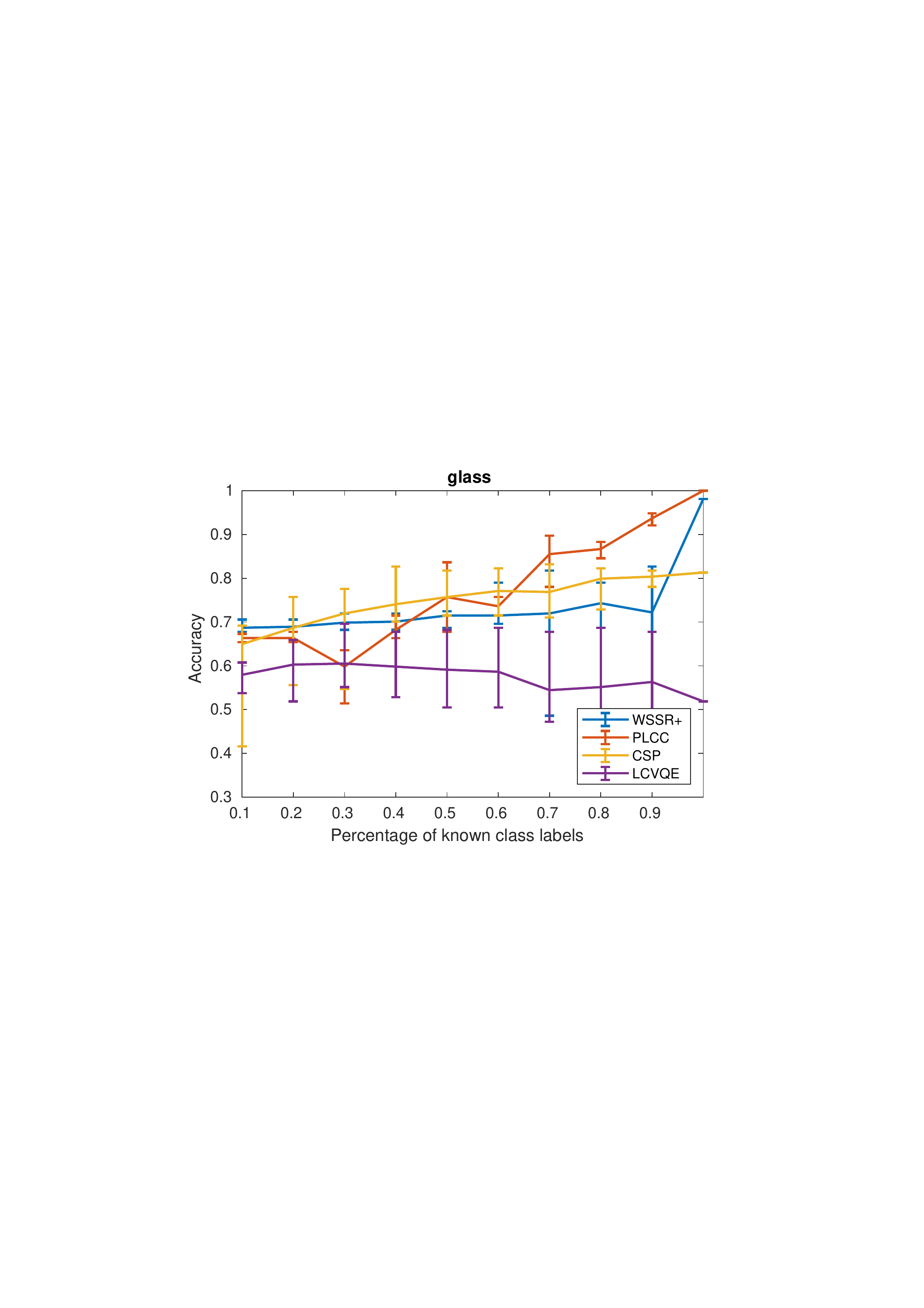}
	\caption{The clustering accuracy (min, median, max) of various constrained clustering algorithms over 20 replications.}
	\label{fig_uci_vis}
\end{figure}

\end{document}